\documentclass[11pt]{article}

\usepackage{acl}

\usepackage{times}
\usepackage{latexsym}
\usepackage[T1]{fontenc}
\usepackage[utf8]{inputenc}
\usepackage{microtype}
\usepackage{inconsolata}
\usepackage{graphicx}
\usepackage{pifont}
\usepackage{booktabs}
\usepackage{multirow}
\usepackage{amsmath, amssymb, amsthm}
\usepackage{algorithm}
\usepackage{algpseudocode}
\usepackage[inline]{enumitem}
\usepackage{subcaption}
\usepackage{tabularx}
\usepackage{makecell}

\newcommand{\cmark}{\ding{51}}
\newcommand{\xmark}{\ding{55}}

\newtheorem{proposition}{Proposition}
\newtheorem{lemma}{Lemma}

\newcommand{\R}{\mathbb{R}}

\title{Budget-Aware Routing for Long Clinical Text}

\author{
 \textbf{Khizar Qureshi\textsuperscript{1,}},
\textbf{Geoffrey Martin\textsuperscript{2,3}},
 \textbf{Yifan Peng\textsuperscript{2,3}}
\\
 \textsuperscript{1}MIT, Cambridge, MA\\
 \textsuperscript{2}Cornell University, Ithaca, NY\\
 \textsuperscript{3}Weill Cornell Medicine, New York, NY\\
\\
\small{
  \href{mailto:kqureshi@mit.edu}{kqureshi@mit.edu},
  \href{mailto:ghm58@cornell.edu}{ghm58@cornell.edu},
  \href{mailto:yip4002@med.cornell.edu}{yip4002@med.cornell.edu}
}
}

\begin{document}
\maketitle

\begin{abstract}
A key challenge for large language models is token cost per query and overall deployment cost. Clinical inputs are long, heterogeneous, and often redundant, while downstream tasks are short and high stakes. We study budgeted context selection, where a subset of document units is chosen under a strict token budget so an off-the-shelf generator can meet fixed cost and latency constraints. We cast this as a knapsack-constrained subset selection problem with two design choices, unitization that defines document segmentation and selection that determines which units are kept.

We propose \textbf{RCD}, a monotone submodular objective that balances relevance, coverage, and diversity. We compare sentence, section, window, and cluster-based unitization, and introduce a routing heuristic that adapts to the budget regime. Experiments on MIMIC discharge notes, Cochrane abstracts, and L-Eval show that optimal strategies depend on the evaluation setting. Positional heuristics perform best at low budgets in extractive tasks, while diversity-aware methods such as MMR improve LLM generation. Selector choice matters more than unitization, with cluster-based grouping reducing performance and other schemes behaving similarly. ROUGE saturates for LLM summaries, while BERTScore better reflects quality differences. We release our code at \url{https://github.com/stone-technologies/ACL_budget_paper}.
\end{abstract}

\section{Introduction}
\label{sec:intro}

Long-context capability is now a headline feature of large language models, yet clinical deployment is constrained by simple arithmetic.
Each additional input token increases inference cost and latency, and clinical systems invoke models repeatedly across high-volume workflows.
Discharge notes, radiology reports, and evidence syntheses are long because they are templated, redundant, and stitched from multiple sources.
For example, MIMIC-IV discharge notes have a median length in the low thousands of tokens \citep{mimic_iv_2023}, and long-document benchmarks such as L-Eval routinely exceed ten thousand tokens per instance \citep{leval_2024,longbench_2023}.

Cost scales linearly with the token budget and with operational volume.
The American Hospital Association reports that community hospitals in the United States account for tens of millions of admissions annually \citep{aha_fast_facts_2025}.
Even a single large hospital generates on the order of ten thousand long notes per year \citep{Rosenbloom2010Generating}.
Let $p_\mathrm{in}$ and $p_\mathrm{out}$ denote the price per million input and output tokens, respectively.
Processing a document with input budget $B$ and output length $S$ incurs a fee.
\begin{equation}
\mathrm{Cost}(B,S) = \frac{p_\mathrm{in} B + p_\mathrm{out} S}{10^6},
\label{eq:cost}
\end{equation}
with current prices reported by API providers \citep{openai_pricing_2025}.
A reduction of even a few hundred tokens per call can translate into substantial annual savings at a hospital scale and can reduce end-to-end latency for time-sensitive decision support.

In practice, such systems support tasks ranging from automated discharge summary generation to real-time clinical decision support, where both cost and response time directly affect adoption.

These constraints motivate a system perspective where we treat context construction as a budget-constrained optimization problem that lies between raw text and an off-the-shelf generator. Classical submodular maximization under knapsack constraints provides a principled mathematical foundation \citep{nemhauser_1978,sviridenko_2004,krause_golovin_2014}, but our focus differs from the canonical setting in three ways.
First, the constraint is measured in tokens, which are coupled directly to monetary and latency budgets.
Second, the objective must behave predictably across budgets, since practitioners often operate under heterogeneous constraints across users, devices, and clinical services.
Third, the system must choose among multiple selectors and representations, rather than commit to a single surrogate objective.
This motivates the routing layer studied in this paper.

Our central finding is that the optimal selection strategy depends on the evaluation paradigm. For extractive evaluation, positional heuristics such as Lead dominate at low budgets because clinical documents front-load important content and ROUGE rewards lexical overlap. For LLM-based generation, diversity-aware selection (MMR) consistently outperforms positional methods because the LLM benefits from non-redundant input rather than verbatim coverage. A budget-aware routing heuristic captures these regime-dependent differences at near-oracle performance.

\subsection{Our contributions}
\label{sec:contrib}

We adopt a system-first view in which the context builder is treated as an explicit module with a measurable input budget, compute footprint, and an auditable output context.
The module receives a long document $D$ and a budget $B$, and emits a shorter context $C_B$ that is passed to a downstream generator.
This decoupling facilitates model-agnostic analysis of unitization and selection and permits comparisons across budget regimes.

Our contributions are as follows.
\begin{enumerate*}[label=(\roman*)]
\item We formalize budgeted context construction with explicit \textbf{unitization} and \textbf{selection} stages, and we evaluate sentence-, window-, section-, and cluster-based unitization strategies.
\item We introduce \textbf{RCD} (Relevance-Coverage-Diversity), a monotone submodular objective that combines relevance, facility-location coverage \citep{lin_bilmes_2011}, and log-determinant diversity \citep{kulesza_taskar_2012} under a token knapsack constraint.
\item We implement a unified selector suite including lead baselines \citep{see2017get}, shuffled controls, sliding-window selection, hierarchical expansion, graph-based semantic clustering, maximal marginal relevance (MMR) \citep{carbonell_goldstein_1998}, and RCD.
\item We propose a lightweight \textbf{budget-aware router} that selects an algorithm based on budget and document statistics, and we evaluate it against oracle upper and lower bounds.
\item We report an experimental study across MIMIC discharge notes, Cochrane abstracts, and L-Eval long-document summarization tasks, including both extractive evaluation and end-to-end LLM generation.
\end{enumerate*}

\section{Related Work}
\label{sec:related}

A recent wave of benchmarks evaluates model behavior as input length grows.
L-Eval collects long documents with human-written questions and summarization tasks \citep{leval_2024}.
LongBench evaluates long context understanding across multiple tasks \citep{longbench_2023}.
RULER and needle-in-a-haystack evaluations isolate retrieval behavior \citep{ruler_2024, needle_haystack_2023}.
Our work is orthogonal to these efforts: we change the context that reaches the model rather than the model itself. A pragmatic response to a long context is to compress prompts or select salient content.
LLMLingua compresses prompts while preserving answer quality \citep{llmlingua_2023}.
Retrieval-augmented generation selects passages from external corpora \citep{rag_2020}.
Our setting differs in that we select from a single long document under a strict token budget rather than retrieving from a large corpus. Submodular functions formalize diminishing returns and have a long history in summarization \citep{lin_bilmes_2011}.
Maximal marginal relevance trades relevance against redundancy using a similarity kernel \citep{carbonell_goldstein_1998}.
Determinantal point processes provide diversity through log-determinant objectives \citep{kulesza_taskar_2012}. We build directly on these ideas, adding explicit token budgets and a routing layer that adapts to the budget regime.

Neural extractive summarizers such as BertSumExt \citep{liu_lapata_2019} learn sentence-level selection end-to-end. Our work is complementary as we study lightweight, training-free selectors under explicit token budgets, isolating the effect of selection policy from learned extraction.

\section{Methodology}

\subsection{Framework overview}
Given a long document $D$ and a token budget $B$, our pipeline has four stages.
\begin{enumerate*}[label=(\arabic*)]
\item \textbf{Unitization}: We segment $D$ into sentence-level candidates and estimate costs $c_i$.
\item \textbf{Representation}: We encode each unit with lexical or semantic features and compute relevance signals.
\item  \textbf{Routing}: We map $(D,B)$ to a selector regime, using a lightweight heuristic that trades off expected marginal utility against computational cost.
\item  \textbf{Selection}: We choose a budget-feasible subset with one of several algorithms, including lead, shuffled, greedy MMR, hierarchical selection, graph-based semantic clustering, and our RCD objective.
\end{enumerate*}
Figure~\ref{fig:architecture} summarizes the end-to-end workflow and the associated evaluation loop on MIMIC, Cochrane, and L-Eval.

\begin{figure*}[t]
\centering
\includegraphics[width=0.9\textwidth]{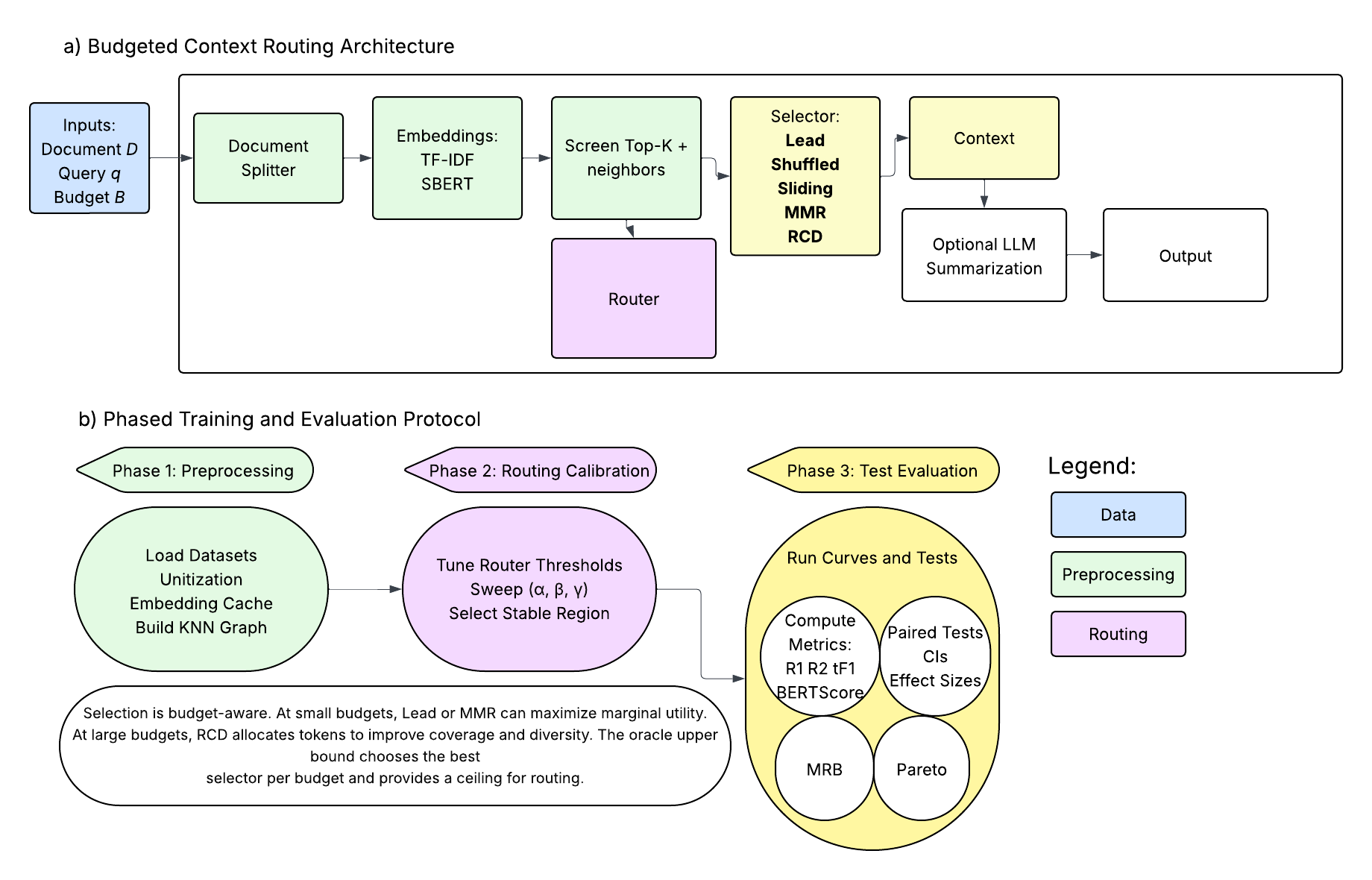}
\caption{End-to-end architecture of the budgeted context construction framework. We begin with sentence-level unitization and feature extraction, route each (document, budget) pair to a selector regime, construct a budget-feasible context, and evaluate with ROUGE and token-level F1. The bottom phase bar reflects the experimental protocol: preprocessing, routing calibration, and held-out evaluation across MIMIC, Cochrane, and L-Eval.}
\label{fig:architecture}
\end{figure*}

\subsection{Problem Formulation}
\label{sec:problem}

Let a document $D$ be partitioned into units $u_1,\dots,u_n$ by a unitization function, where each unit $u_i$ incurs a token cost $c_i$.
Given a budget $B$, the goal is to select a subset $S \subseteq \{1,\dots,n\}$ such that
$
\sum_{i \in S} c_i \le B
$
while maximizing downstream utility. We assume access to embeddings $\phi(u_i) \in \R^d$ and define a similarity kernel $k(i,j) = \langle \phi(u_i), \phi(u_j) \rangle$.
A selection policy maps $(u_{1:n}, B)$ to a subset $S$.
The selected units are concatenated in document order and either evaluated directly in an extractive setting or passed to a large language model (LLM) for abstractive summarization.
Algorithm~\ref{alg:pipeline} summarizes the complete system.

\begin{table*}[t]
\centering
\small
\setlength{\tabcolsep}{3pt}
\begin{tabularx}{\linewidth}{l@{}cccc>{\raggedright\arraybackslash}X>{\raggedright\arraybackslash}X}
\toprule
Study & \makecell[c]{Long-context\\eval} & \makecell[c]{Budgeted\\context}  & \makecell[c]{Budget-aware\\routing} & \makecell[c]{Multi-\\domain} & Selector family & Objective\\
\midrule
L-Eval \citep{leval_2024} & \cmark & \xmark  & \xmark & \cmark & -- & --  \\
LongBench \citep{longbench_2023} & \cmark & \xmark & \xmark & \cmark & -- & --  \\
RULER \citep{ruler_2024} & \cmark & \xmark & \xmark & \xmark & -- & --  \\
MMR selection \shortcite{carbonell_goldstein_1998} & \xmark & \cmark & \xmark & \xmark & MMR & submodular variants  \\
Facility \citep{lin_bilmes_2011} & \xmark & \cmark & \xmark & \xmark & submodular & facility location \\
\midrule
\textbf{This work} & \cmark & \cmark  & \cmark & \cmark & lead, MMR, hier, GSC, RCD & knapsack + submodular \\
\bottomrule
\end{tabularx}
\caption{Positioning relative to prior work. Existing long-context benchmarks evaluate models at fixed or implicit context lengths, while classical selectors optimize within a budget but lack benchmark-driven, budget-aware routing across methods. We unify these threads by treating context construction as a budgeted operations research problem and evaluating the same selector family across clinical, scientific, and benchmark corpora.}
\label{tab:related_comparison}
\end{table*}



\begin{algorithm}[h!]
\small
\caption{Budgeted context selection}
\label{alg:pipeline}
\begin{algorithmic}[1]
\Require Document $D$, budget $B$, unitization $\mathcal{U}$, selector $m$
\Ensure Selected context $y$
\State Build units $u_{1:n} \gets \mathcal{U}(D)$
\State Compute costs $c_i$ and embeddings $\phi(u_i)$
\State Compute relevance $r_i$ and similarity kernel $k$
\State Select subset $S \gets m(u_{1:n}, c_{1:n}, r_{1:n}, K, B)$
\State Concatenate context $y \gets \text{concat}(\{u_i : i \in S\})$
\State \Return $y$
\end{algorithmic}
\end{algorithm}

\subsection{Unitization Strategies}
\label{sec:unitization}

Unitization determines the granularity and structure of selectable units.
We explore four strategies that create different unit types before selection.

\textbf{Sentence-Unit.}
Splits at sentence boundaries, treating each sentence as an independent unit. This provides maximum flexibility but may fragment coherent content.

\textbf{Section-Unit.}
Parses section headers (e.g., \texttt{CHIEF COMPLAINT} or \texttt{HOSPITAL COURSE}) and assigns sentences to their corresponding sections, enabling section-aware scoring.

\textbf{Window-Unit.}
Creates overlapping chunks (base 50 words) with size varying by local content density.

\textbf{Cluster-Unit.}
Groups semantically related sentences using a similarity graph with cosine similarity and proximity decay; connected components become units.

\subsection{Selection Algorithms}
\label{sec:selectors}

Given units from any unitization strategy, selection determines which units to keep under the budget.

\paragraph{Baselines}

The simplest baseline is \textbf{Lead Selection}, which takes units from the beginning of the document until the budget is exhausted.
It requires minimal computation beyond measuring unit lengths and serves as a strong baseline when editorial conventions prioritize important content.
\textbf{Shuffled selection} randomizes unit order before applying lead selection. This strategy allows us to quantify the contribution of positional information by comparing its performance with that of standard lead selection.
\textbf{Sliding selection} finds the contiguous sequence of units with the highest total relevance under the budget.
It preserves local coherence and performs well when salient information is clustered in the document.
\textbf{Hierarchical selection} identifies high-relevance anchor sentences and progressively adds adjacent context until the budget is filled.
This strategy balances readability with relevance.
\textbf{GraphCluster selection} builds a sentence similarity graph and traverses high-relevance connected components, selecting representative sentences from each component within the budget.
This strategy encourages topical coverage by allocating budget across semantic clusters.

\paragraph{Maximal Marginal Relevance (MMR)}
\label{sec:mmr}

MMR selects units iteratively, balancing relevance against redundancy \citep{carbonell_goldstein_1998}.
At each step, MMR scores each candidate unit $i$ as
\begin{equation}
\mathrm{MMR}(i \mid S_t) = \lambda r_i - (1-\lambda)\max_{j \in S_t} k(i,j),
\end{equation}
where $r_i$ measures relevance to the query or document centroid, and $\lambda \in [0,1]$ controls the relevance–redundancy tradeoff.
The redundancy term is updated at each iteration based on the currently selected set $S_t$.
This enables adaptive diversification: once a topic is covered, additional units on the same topic are penalized.

\paragraph{Relevance, Coverage, Diversity (RCD)}
\label{sec:rcd}

RCD defines an objective that explicitly separates three criteria.
The relevance term $R(S) = \sum_{i \in S} r_i$ encourages selecting units aligned with the query.
Facility location coverage $C(S) = \sum_{i=1}^n \max_{j \in S} k(i,j)$ encourages selecting a set that represents the entire document.
Log-determinant diversity $D(S) = \log \det(I + \eta K_S)$ where $K_S$ is the kernel submatrix indexed by $S$, provides a principled diversity model.
The combined objective is:
\begin{equation}
F(S) = \alpha R(S) + \beta C(S) + \gamma D(S).
\end{equation}
If $k(i,j) \ge 0$ and $K$ is positive semidefinite, then $F$ is monotone submodular. This property enables approximation guarantees under knapsack constraints via lazy greedy selection.



\subsection{Routing by Budget Regime}
\label{sec:routing}

The optimal selection strategy depends on the budget and evaluation paradigm.
At small budgets, positional heuristics perform well because important content is front-loaded, and extractive metrics reward lexical overlap.
At moderate budgets, redundancy becomes limiting, making diversity-aware selection advantageous.
At large budgets, coverage dominates, as the challenge shifts to representing diverse topics. We formalize this intuition as a \textbf{budget-aware routing policy} that maps budget $B$ to a selection method.
Two document statistics inform the routing decision.
The \textbf{front-loading index} measures the fraction of total relevance captured by units fitting in the budget:
\begin{equation}
\phi = \frac{\sum_{i=1}^{m} r_i}{\sum_{i=1}^n r_i},
\end{equation}
where $m$ is the number of units fitting in $B$.
The \textbf{redundancy index} measures local repetition:
\begin{equation}
\rho = \frac{1}{n-1}\sum_{i=1}^{n-1} k(i,i+1).
\end{equation}

A simple heuristic then chooses the algorithm based on budget:
\begin{equation}
\pi(B)=
\begin{cases}
\textsc{Lead} & \text{if } B \le B_1, \\
\textsc{MMR} & \text{if } B_1 < B \le B_2, \\
\textsc{RCD} & \text{otherwise.}
\end{cases}
\end{equation}
Thresholds $B_1$ and $B_2$ are tuned on the validation set to maximize the evaluation metric (here, mean ROUGE-1) across the budget sweep.

\section{Experimental Setup}
\label{sec:experiments}

We evaluate on three datasets spanning clinical writing, scientific abstracts, and long document benchmarks (Table~\ref{tab:data_stats}).
\textbf{MIMIC} contains discharge notes paired with Brief Hospital Course summaries written by physicians \citep{mimic_iv_2023}.
\textbf{Cochrane} provides abstract-to-conclusion pairs from systematic reviews, representing clinical evidence synthesis.
\textbf{L-Eval} includes summarization tasks across government reports, meetings, news, and patents \citep{leval_2024}.

\begin{table}[t]
\centering
\small
\begin{tabular}{lrrrr}
\toprule
Dataset & \multicolumn{2}{c}{Input tokens} & \multicolumn{2}{c}{Target tokens} \\
\cmidrule(r){2-3}\cmidrule(l){4-5}
& median & mean & median & mean \\
\midrule
MIMIC & 2175 & 2272 & 456 & 567 \\
Cochrane & 542 & 600 & 149 & 166 \\
L-Eval & 13069 & 16027 & 324 & 310 \\
\bottomrule
\end{tabular}
\caption{Dataset scale and length statistics. Token counts are either provided by the dataset or estimated from word counts using an empirical tokens-per-word ratio.}
\label{tab:data_stats}
\end{table}

We evaluate across token budgets $B \in \{256, 512, 1024, 2048, 4096, 8192, 16384\}$.
Selection methods use sentence-level units for extractive evaluation, while unitization experiments consider Sentence-, Section-, Window-, and Cluster-Unit crossed with Lead, MMR, and our proposed RCD method.

For end-to-end evaluation, the selected context is passed to GPT-4o (using institutional-supported Azure OpenAI service and opting out of human review of the data) for abstractive summary generation. The extractive setting evaluates the context builder as an independent module: ROUGE between the selected context and the reference acts as a coverage proxy, measuring how much reference-aligned content survives the selection step. This separation isolates selector quality from generator quality before committing to repeated LLM calls.

We report ROUGE-1 and ROUGE-2 F1 scores, which measure unigram and bigram overlap between generated text and reference summary. For end-to-end LLM evaluation, we also report BERTScore F1, which measures semantic similarity using contextualized embeddings from DeBERTa-xlarge-MNLI with baseline rescaling~\citep{bertscore_2020}. Scores near zero indicate chance-level similarity, and scores of 0.10--0.15 indicate moderate semantic alignment typical of abstractive summaries. Full metric definitions are provided in Appendix~\ref{app:metrics}.

\section{Results}
\label{sec:results}

\subsection{Extractive Selection Results}

Table~\ref{tab:r1_main} reports ROUGE-1 scores for sentence-level selection methods across datasets and budgets.

\begin{figure*}[t]
\centering
\begin{subfigure}{0.32\linewidth}
\centering
\includegraphics[width=\linewidth]{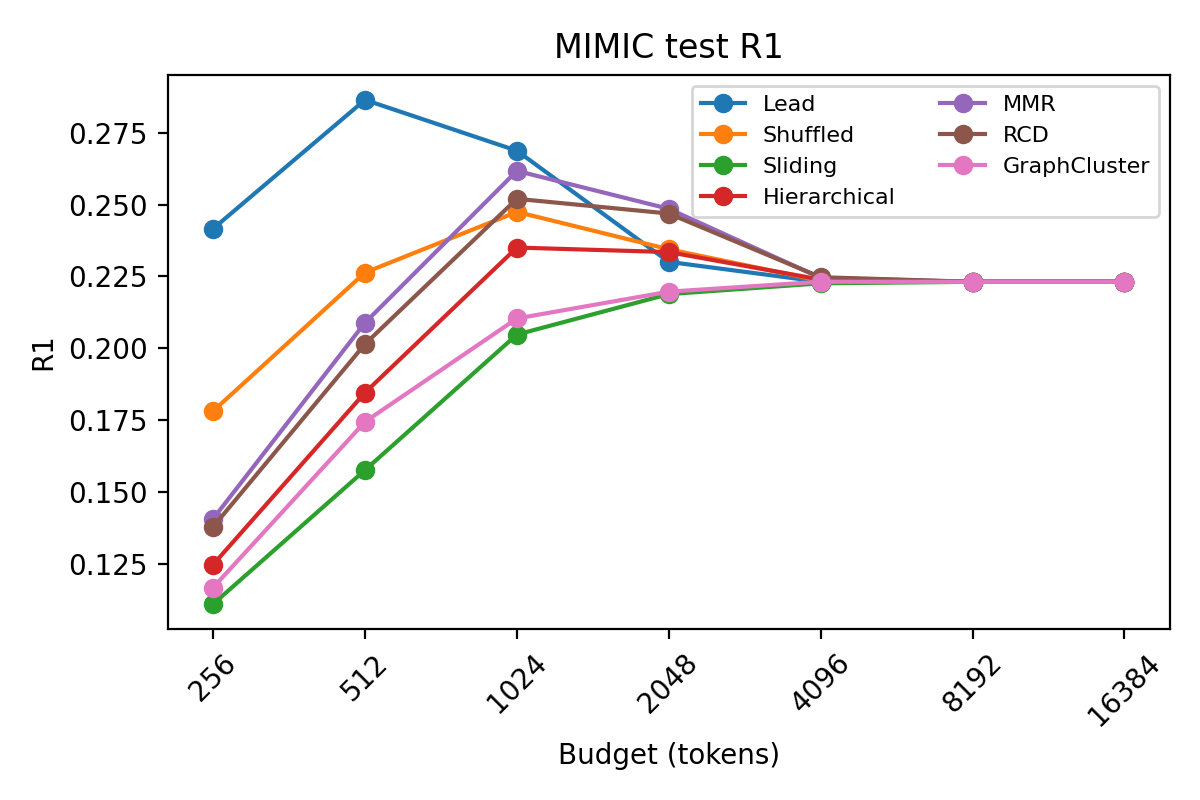}
\caption{MIMIC}
\end{subfigure}
\begin{subfigure}{0.32\linewidth}
\centering
\includegraphics[width=\linewidth]{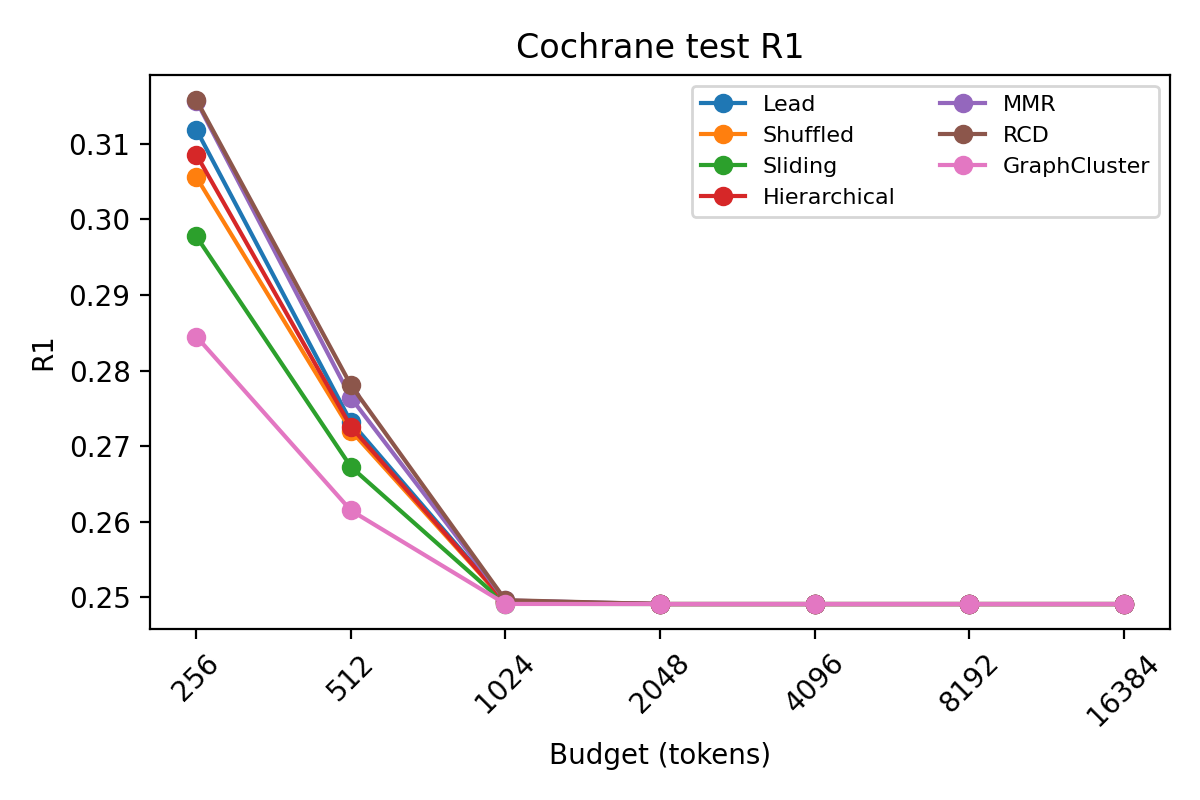}
\caption{Cochrane}
\end{subfigure}
\begin{subfigure}{0.32\linewidth}
\centering
\includegraphics[width=\linewidth]{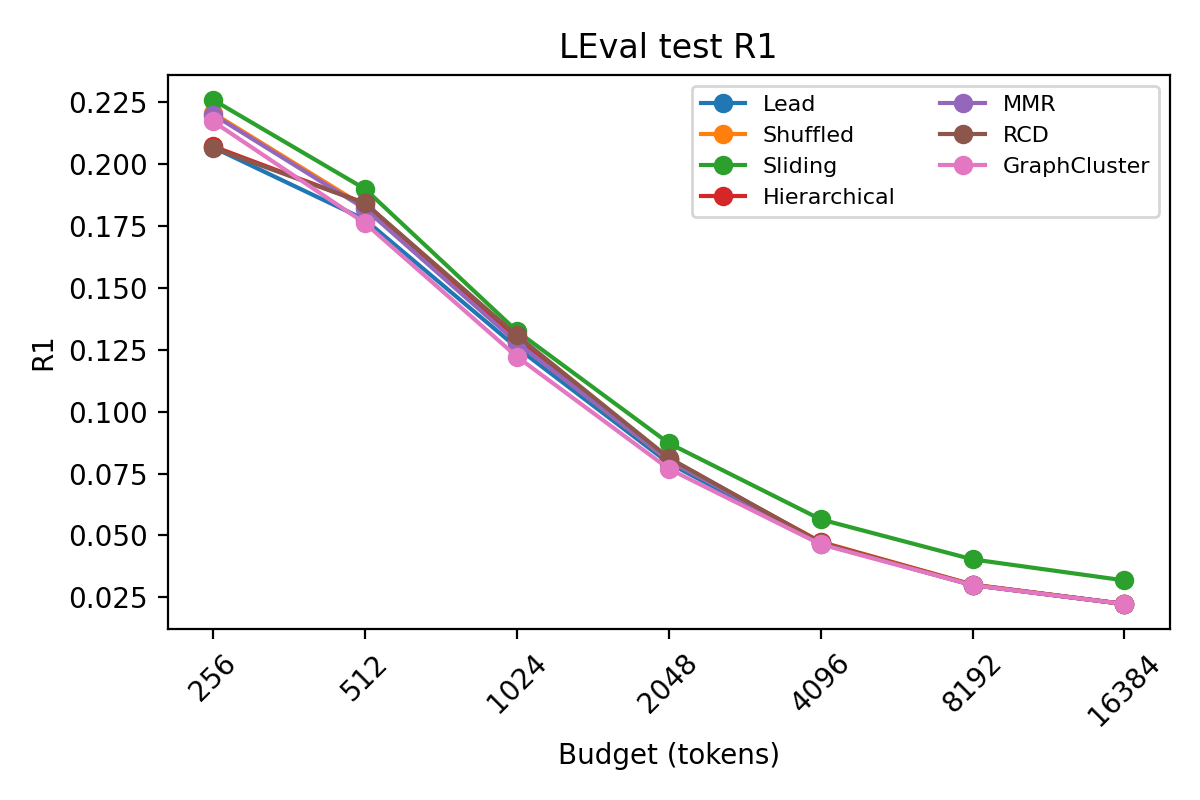}
\caption{L-Eval}
\end{subfigure}
\caption{ROUGE-1 F1 as a function of budget. The strongest method depends on the budget regime.}
\label{fig:curves_r1}
\end{figure*}

\begin{figure*}[t]
\centering
\begin{subfigure}{0.32\linewidth}
\centering
\includegraphics[width=\linewidth]{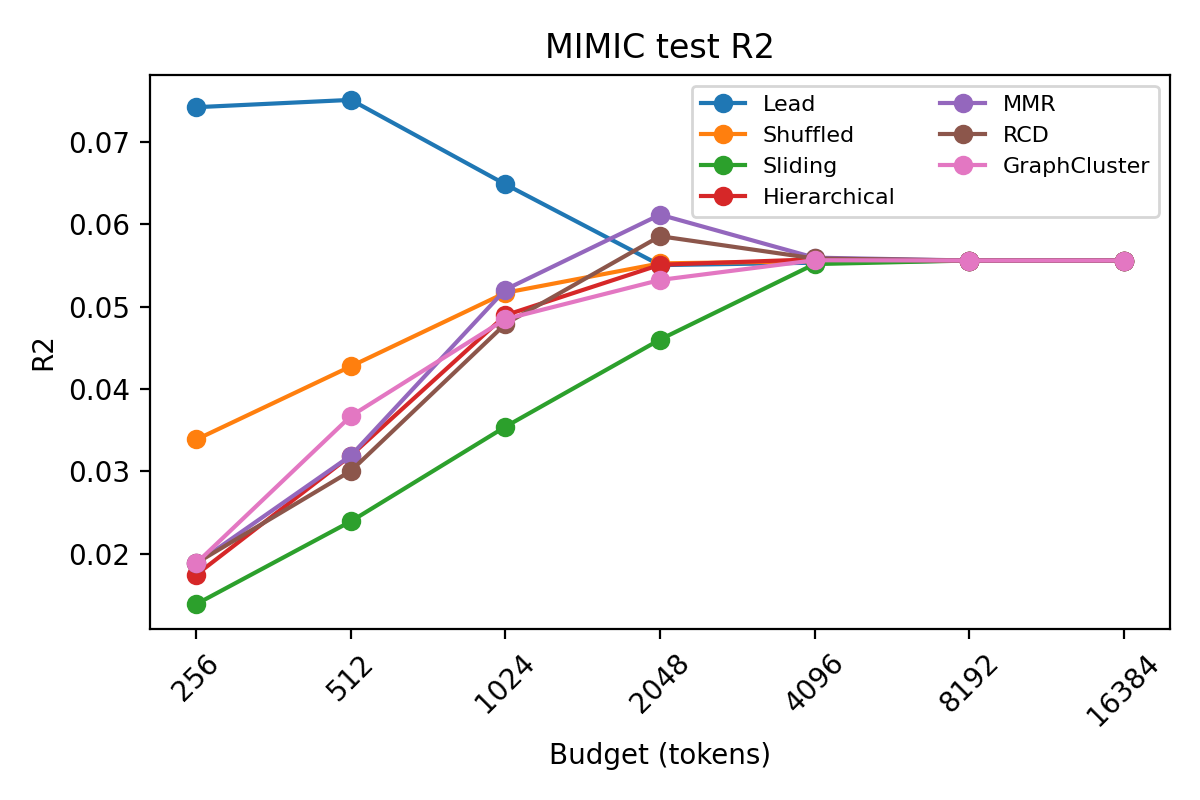}
\caption{MIMIC}
\end{subfigure}
\begin{subfigure}{0.32\linewidth}
\centering
\includegraphics[width=\linewidth]{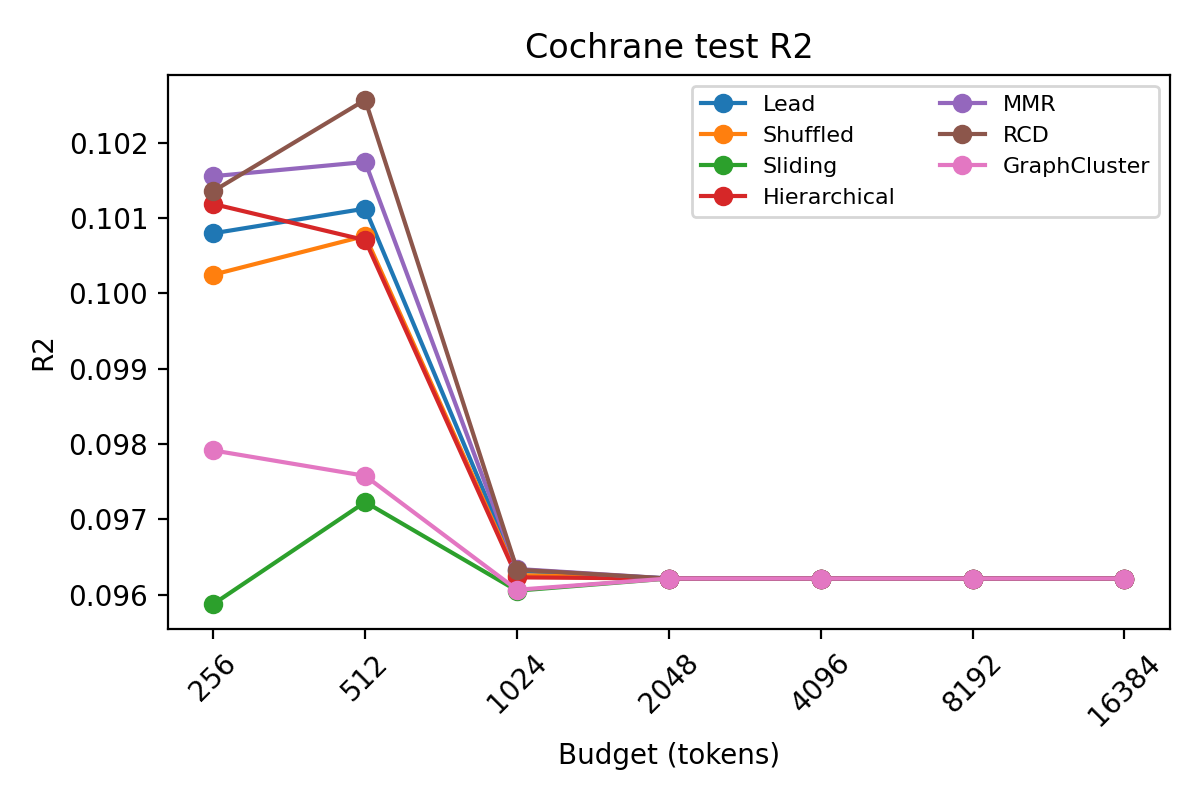}
\caption{Cochrane}
\end{subfigure}
\begin{subfigure}{0.32\linewidth}
\centering
\includegraphics[width=\linewidth]{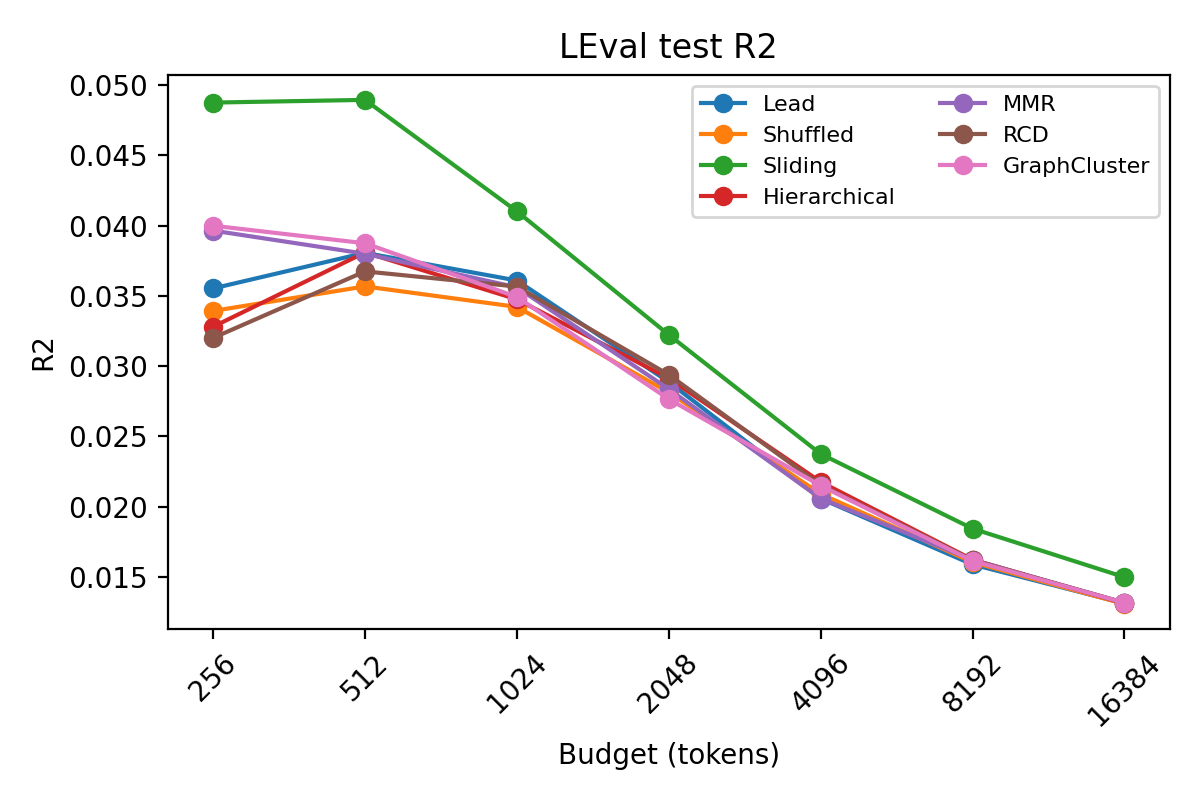}
\caption{L-Eval}
\end{subfigure}
\caption{ROUGE-2 F1 as a function of budget. Bigram overlap highlights redundancy effects at larger budgets.}
\label{fig:curves_r2}
\end{figure*}


On MIMIC, Lead performs competitively at 256--512 tokens due to front-loaded content in the discharge notes.
With larger budgets, diversity-aware selectors such as MMR and RCD help reduce redundancy, though ROUGE-1 gains are modest.
At very large budgets, all methods converge as the entire document fits within the budget. Cochrane abstracts are short, leading to selection methods converging quickly.
At 256--512 tokens, RCD typically performs best, showing that it prioritizes broader coverage over front-loading.

For L-Eval, where key content is span-specific, the Sliding selector is particularly effective for medium- and large-budget projects.
At the smallest budget, non-redundant selection methods remain competitive.

Figures~\ref{fig:curves_r1} and~\ref{fig:curves_r2} show ROUGE-1 and ROUGE-2 F1 across budgets. On MIMIC, Lead is strongest at low budgets due to front-loaded content, but MMR dominates at moderate budgets (1024) tokens by filtering redundancy. On Cochrane, short documents cause rapid convergence across methods. On L-Eval, Sliding selection consistently outperforms other methods by identifying concentrated spans of relevant content, while positional heuristics (Lead) provide little advantage over random selection. ROUGE-2 shows similar patterns with greater method separation.

\subsection{Pareto View and Budget Efficiency}

Figure~\ref{fig:pareto} shows the best achievable ROUGE-1 at each budget over evaluated policies.
This upper envelope defines a Pareto frontier between cost and quality, and represents the natural target for a routing policy.

\begin{figure}[t]
\centering
\includegraphics[width=0.85\linewidth]{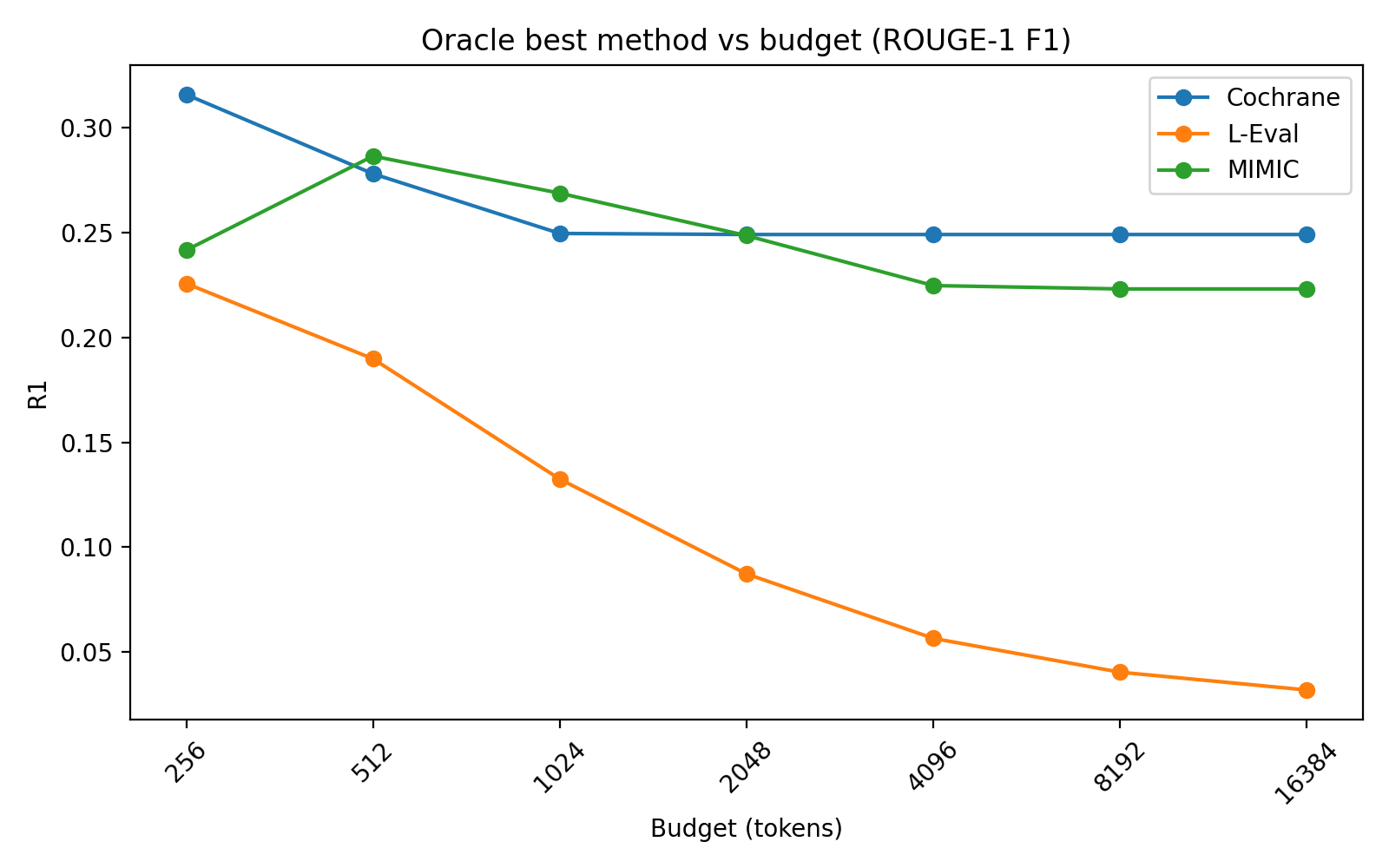}
\caption{Pareto frontier for ROUGE-1 over budgets for each dataset, defined as the best score over evaluated policies at each budget.}
\label{fig:pareto}
\end{figure}

\subsection{End-to-End Results with LLM Generation}
\label{sec: end-to-end}

Tables~\ref{tab:mimic_llm_rouge} and~\ref{tab:mimic_llm_bert} report ROUGE-1 and BERTScore F1 on MIMIC when the selected context is passed to GPT-4o for abstractive summary generation. We evaluate all combinations of unitization strategies and selection algorithms.

To contextualize absolute score levels, we evaluate a full-context baseline that passes the entire MIMIC document to GPT-4o with no budget constraint (Table~\ref{tab:fullcontext}).

\begin{table}[t]
\centering
\small
\setlength{\tabcolsep}{4pt}
\begin{tabular}{lccc}
\toprule
& R-1 & R-2 & BS \\
\midrule
Full context & .277 & .071 & .106 \\
Sent./MMR, $B{=}512$ & .277 & -- & .117 \\
Sent./MMR, $B{=}1024$ & .282 & -- & .120 \\
\bottomrule
\end{tabular}
\caption{Full-context baseline on MIMIC with GPT-4o versus budgeted selection ($N{=}100$). R-1: ROUGE-1, R-2: ROUGE-2, BS: BERTScore. Sentence/MMR at 512 tokens matches full-context ROUGE-1 and exceeds it at 1024.}
\label{tab:fullcontext}
\end{table}

Budget selection at 512 tokens, roughly a quarter of median document length, recovers full-context performance, and at 1024 tokens exceeds it. This is consistent with redundancy in clinical text since removing repeated content can help the LLM focus on relevant information.

MMR achieves the highest ROUGE-1 at low budgets across most unitization strategies. At 256 tokens, Hierarchical+MMR achieves 0.267, outperforming Lead by 2.1 points. While the margin is modest, it is consistent: MMR's redundancy penalty improves context even when the LLM generates abstractive summaries.

BERTScore highlights differences that ROUGE misses. While ROUGE-1 varies by only 1--3 points across methods, BERTScore shows clearer separation. MMR consistently outperforms Lead (0.111 vs 0.091 at 256 for Hierarchical), suggesting that non-redundant context helps the LLM produce more semantically appropriate summaries.

Hierarchical-Unit and Sentence-Unit produce nearly identical results across all selectors and budgets, confirming that current selectors do not exploit section metadata. Cluster-Unit underperforms at low budgets (0.230 vs 0.247 for Lead at 256 tokens), though the gap narrows at higher budgets.

Finally, ROUGE scores plateau around 0.27--0.28 regardless of budget or method. This ceiling likely reflects the abstractive nature of LLM generation: the model paraphrases rather than copying, limiting lexical overlap with the reference.

\subsection{Position Dependence}

\begin{table}[t]
\centering
\small
\begin{tabular}{lcccc}
\toprule
& 256 & 512 & 1024 & 2048 \\
\midrule
Lead & 0.247 & 0.264 & 0.269 & 0.280 \\
Shuffled & 0.230 & 0.257 & 0.268 & 0.271 \\
\midrule
$\Delta$ & +0.017 & +0.007 & +0.001 & +0.009 \\
\bottomrule
\end{tabular}
\caption{Position dependence: Lead vs.\ Shuffled on MIMIC with LLM generation (N=100).}
\label{tab:position}
\end{table}

Positional signals are present but modest (Table \ref{tab:position}). At 256 tokens, Lead outperforms Shuffled by 1.7 points (0.247 vs 0.230). This suggests that while MIMIC discharge notes front-load relevant information, the advantage is limited when an LLM processes the context. The model can identify salient information regardless of position. By 512 tokens and beyond, the performance gap is negligible.  

\subsection{ROUGE vs.\ BERTScore}
ROUGE measures lexical overlap between system output and a reference summary \citep{lin_2004}.
We report ROUGE-1 and ROUGE-2 as $n$-gram F1 scores, which are appropriate when the desired summary copies domain-specific phrases and entities.
However, when an LLM is used as the generator, outputs often paraphrase rather than copy, compressing the dynamic range of ROUGE across methods.

To complement ROUGE, we report BERTScore F1 \citep{bertscore_2020}.
BERTScore aligns tokens via contextual embeddings and computes soft precision and recall, which makes it sensitive to semantic similarity under paraphrase.
We use baseline rescaling, which subtracts the expected similarity of unrelated sentence pairs, so values near zero indicate chance-level alignment and values around $0.10$ indicate moderate semantic agreement for abstractive summaries.

On MIMIC with LLM generation, ROUGE-1 concentrates in a narrow band (roughly $0.23$ to $0.28$), while BERTScore separates selectors more clearly.
At $B=256$ tokens, MMR achieves the highest BERTScore across unitizations, consistent with the intuition that redundancy control improves the semantic content available to the generator even when lexical overlap changes little.
Together, these metrics distinguish two failure modes.
A selector can achieve a high ROUGE by copying frequent boilerplate.
A selector can achieve a high BERTScore by covering clinically salient concepts that the generator paraphrases.
We therefore report both throughout.

\subsection{Routing Results}

Figure~\ref{fig:routing_bounds} compares the performance of the optimized routing heuristic against the oracle upper and lower bounds. The heuristic closely tracks the upper bound across all datasets, and the gap between the best and worst methods is narrow (1--2 points), suggesting that budget matters more than selector choice. The degradation on MIMIC beyond 1024 tokens likely reflects signal dilution from including less relevant content.

\begin{figure*}[t]
\centering
\begin{subfigure}{0.32\linewidth}
\centering
\includegraphics[width=\linewidth]{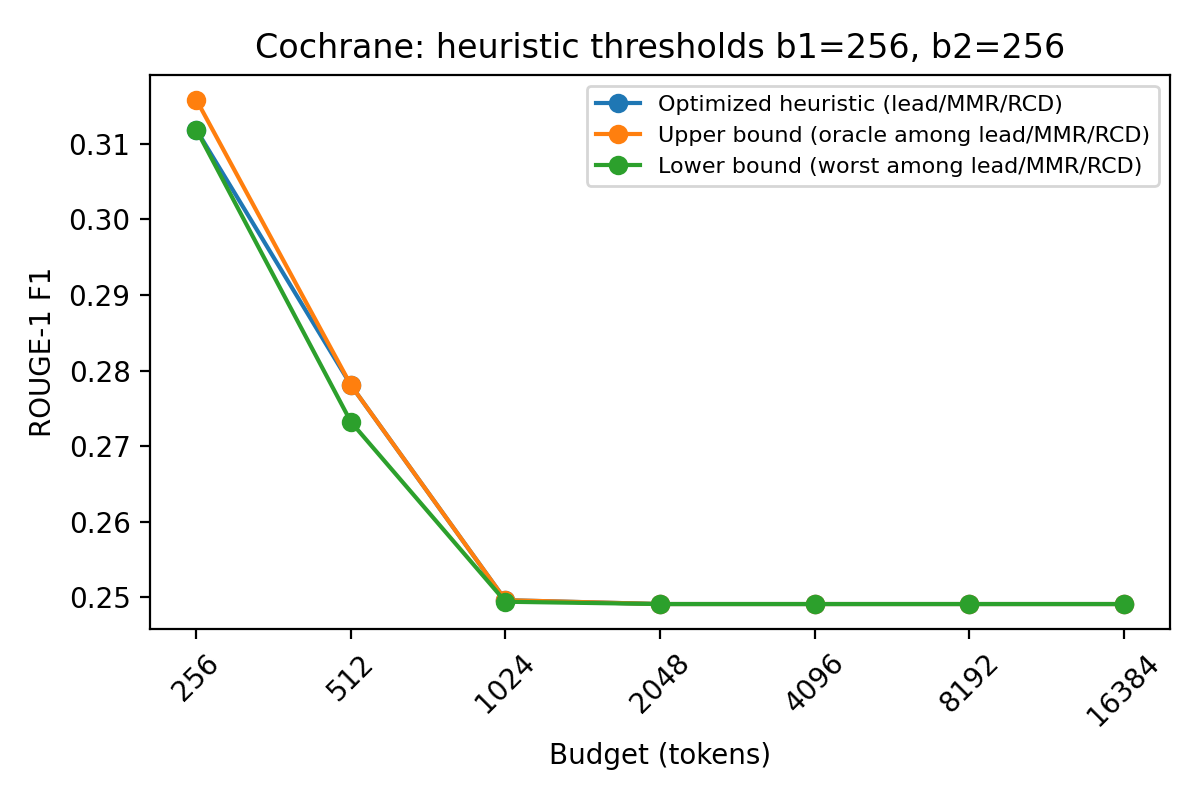}
\caption{Cochrane}
\end{subfigure}
\begin{subfigure}{0.32\linewidth}
\centering
\includegraphics[width=\linewidth]{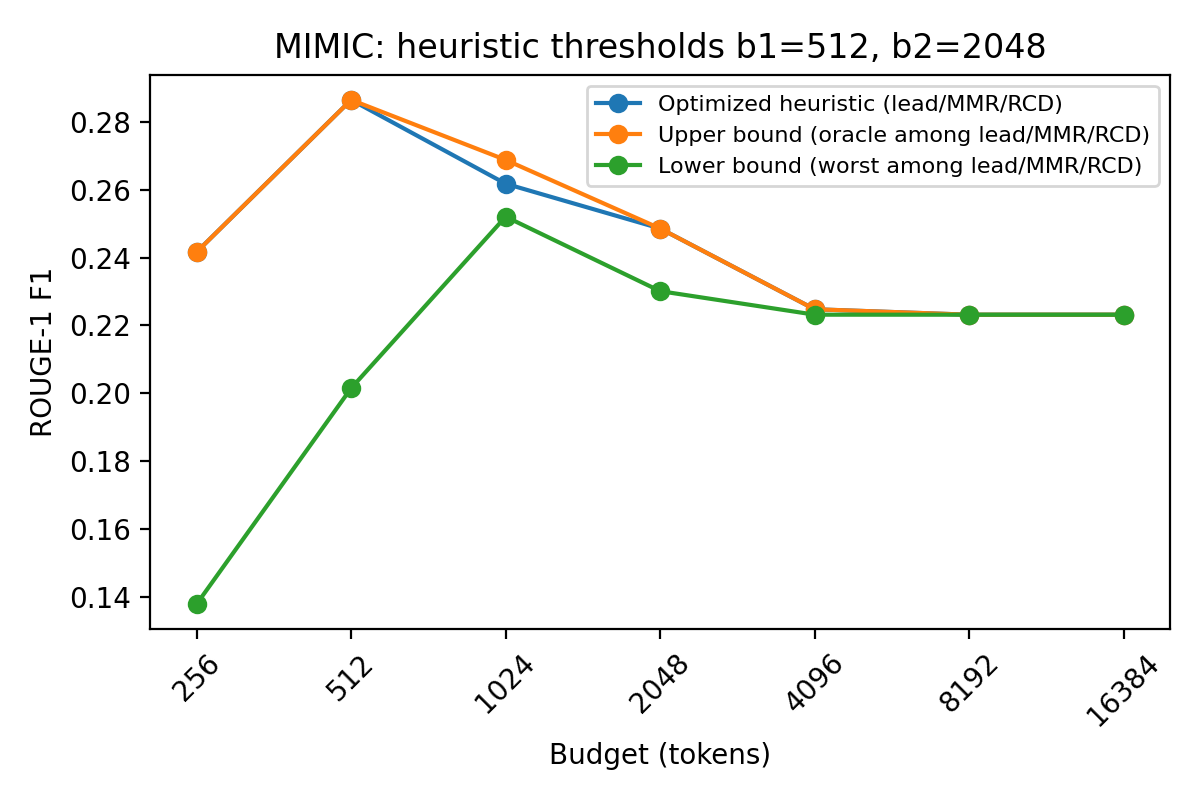}
\caption{MIMIC}
\end{subfigure}
\begin{subfigure}{0.32\linewidth}
\centering
\includegraphics[width=\linewidth]{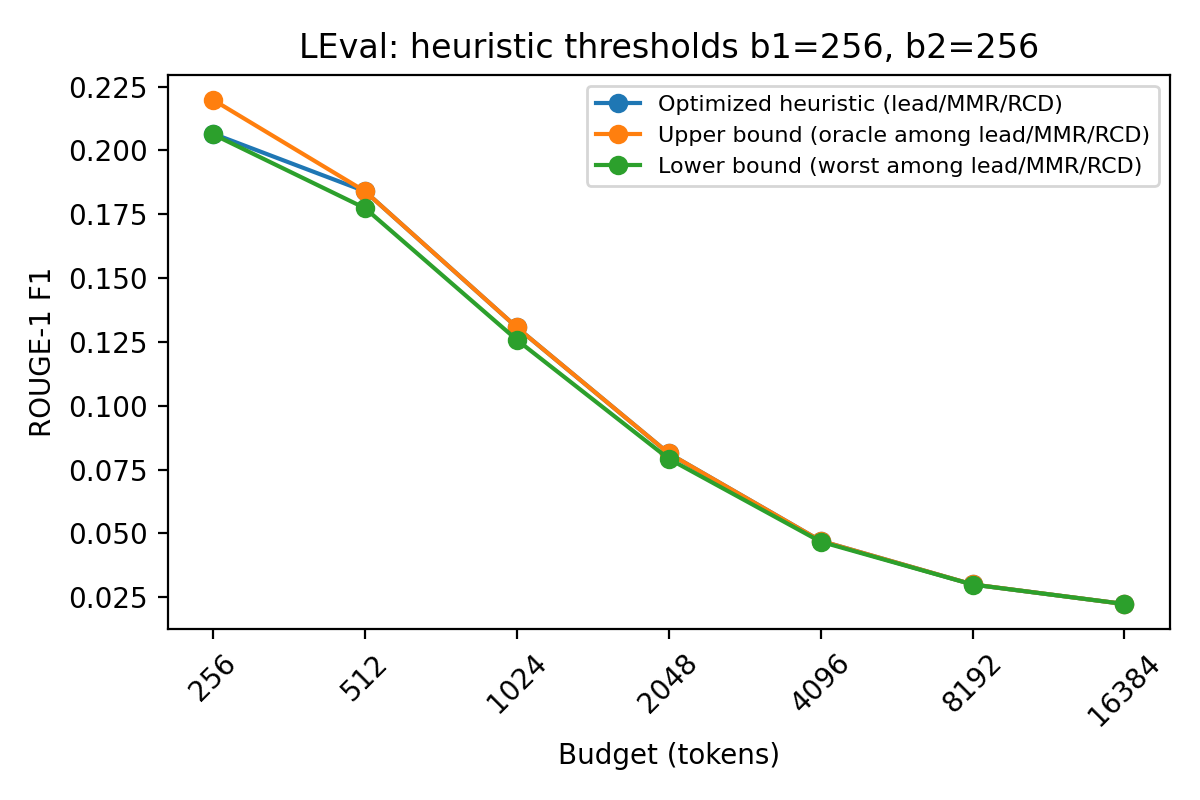}
\caption{L-Eval}
\end{subfigure}
\caption{Optimized routing heuristic performance on test, with oracle upper and lower bounds over Lead, MMR, and RCD.}
\label{fig:routing_bounds}
\end{figure*}

\begin{table}[t]
\centering
\small
\begin{tabular}{lcccc}
\toprule
Unitization & 256 & 512 & 1024 & 2048 \\
\midrule
Sentence & \textbf{0.247} & \textbf{0.264} & \textbf{0.282} & 0.273 \\
Hierarchical & 0.246 & 0.262 & 0.279 & 0.271 \\
Sliding & 0.238 & 0.255 & 0.278 & \textbf{0.281} \\
Cluster & 0.230 & 0.258 & 0.275 & 0.278 \\
\bottomrule
\end{tabular}
\caption{Routing heuristic results on MIMIC with LLM generation. Policy: Lead for $B \leq 512$, MMR for $512 < B \leq 1024$, RCD for $B > 1024$.}
\label{tab:routing}
\end{table}

The routing heuristic closely tracks the oracle upper bound across all three datasets, demonstrating that a simple budget-based policy can approximate per-instance method selection. 
Table~\ref{tab:routing} shows routing results on MIMIC with LLM generation. Using Sentence-Unit with routing, the system achieves a ROUGE-1 score of 0.282 at 1024 tokens, matching the performance of the best single-method configuration. 

The heuristic assigns Lead to low budgets where positional structure is most informative, MMR to moderate budgets where redundancy filtering provides the largest gains, and RCD to high budgets where coverage becomes the limiting factor.

This routing policy provides a practical guideline: at tight budgets, prioritize non-redundant selection; at generous budgets, ensure broad coverage. While the current thresholds were tuned on extractive evaluation, the framework extends naturally to generative pipelines by recalibrating on LLM outputs.

\subsection{Generalization to PubMedQA}

To test whether our framework generalizes beyond summarization and beyond front-loaded documents, we evaluate on PQA-L, the long-context split of PubMedQA \citep{jin_pubmedqa_2019}. We run 500 instances at budgets of 64--256 tokens, using GPT-4o as the downstream classifier. Table~\ref{tab:pubmedqa} reports macro-F1.

\begin{table}[t]
\centering
\small
\begin{tabular}{lcccc}
\toprule
& 64 & 96 & 128 & 256 \\
\midrule
Lead & 0.233 & 0.284 & 0.372 & 0.587 \\
Shuffled & 0.412 & 0.484 & 0.514 & 0.587 \\
MMR & 0.328 & 0.409 & 0.467 & 0.598 \\
RCD & 0.396 & 0.457 & 0.503 & 0.598 \\
\midrule
Full context & \multicolumn{4}{c}{0.611} \\
\bottomrule
\end{tabular}
\caption{Macro-F1 on PQA-L (PubMedQA long-context split, $N{=}500$) with GPT-4o. Lead is worst at every budget; positional order is harmful for this task.}
\label{tab:pubmedqa}
\end{table}

Lead is the worst-performing selector at every budget. Shuffled selection outperforms Lead by up to 20 points, confirming that document-order position is actively harmful when key information appears later in the document. RCD provides the largest gains among principled selectors at low budgets, with the coverage and diversity terms compensating for the absence of a useful positional signal. At 256 tokens, MMR and RCD closely approach the full-context baseline (0.611), suggesting that principled selection substantially narrows the gap even in domains where front-loading does not hold.

\section{Discussion}
\label{sec:discussion}

The central finding of this work is that the optimal selection strategy depends on the evaluation paradigm. In the extraction evaluation (Table \ref{tab:r1_main}), Lead dominates at low budgets on MIMIC (0.242 vs 0.141 for MMR at 256 tokens),  because discharge notes front-load relevant content and extractive metrics reward direct lexical overlap. With LLM generation (Table \ref{tab:mimic_llm_rouge}), MMR outperforms other selectors at low budgets (0.267 vs 0.246 for Lead at 256 tokens), because the LLM benefits from non-redundant input even when generating abstractive summaries. This reversal suggests different bottlenecks: extractive metrics reward lexical overlap, which positional heuristics achieve by selecting content that shares vocabulary with early-appearing reference material. LLM generation is less sensitive to exact word choice since the model can paraphrase and reorganize. In this setting, redundant input wastes context-window capacity without adding information, thereby making diversity-aware selection more valuable.

The submodular structure of RCD provides approximation guarantees under knapsack constraints, but our experiments suggest that simpler objectives often suffice. MMR, which lacks explicit coverage guarantees, performs comparably to RCD across most settings. We conjecture that coverage is important when documents contain multiple disjoint themes with low cross-similarity -- a setting where MMR's local redundancy penalty cannot ensure global coverage. The sensitivity analysis (Appendix \ref{app:sensitivity}) shows that RCD performance is robust to weight variation, but the gains over MMR remain modest.


The routing heuristic achieves near-oracle performance on extractive evaluation (Figure \ref{fig:routing_bounds}), validating the intuition that the different objectives suit different budget regimes. However, the thresholds were tuned for extractive metrics. Our LLM results suggest recalibration: the front-loading index $\phi$ correctly identifies when positional selection suffices for extractive tasks, but may overestimate its value for generative pipelines where the model can identify salient content regardless of position.

The 1.7-point gap between Lead and Shuffled at 256 tokens is smaller than expected, given editorial conventions in clinical writing. Physicians often place diagnoses and chief complaints early, yet this positional structure provides only modest benefits when our LLM generates the final summary. One interpretation is that LLMs are robust to input ordering: given relevant content anywhere in the context, the model can identify and synthesize it effectively. This robustness is valuable for deployment but complicates the design of selection policies that rely on positional heuristics.

The PubMedQA results confirm that our framework does not depend on front-loading assumptions. When key evidence appears in later sections, Lead is the worst selector at every budget, and RCD's coverage and diversity terms compensate for the absence of useful positional signals. This validates the routing premise: the router should detect when positional selection is inappropriate and prefer diversity-aware methods.



BERTScore proves more informative than ROUGE for our task. The 2--4-point gaps in BERTScore between MMR and Lead are proportionally larger than the 1--2-point ROUGE gaps and align better with intuitions about selection quality. For evaluating LLM-based summarization pipelines, semantic similarity metrics may be more appropriate than lexical overlap.

The failure of Cluster-Unit is instructive.
We hypothesized that pre-grouping related sentences would help selectors allocate budget across topics.
Instead, clustering appears to create units that are either too large for fine-grained selection or that fragment the temporal structure of clinical narratives.
The admission-to-discharge arc of a hospital course may not align with semantic clusters.

The equivalence between Section-Unit and Sentence-Unit reveals a gap in implementation.
Parsing the section structure provides metadata that could inform scoring, but our current selectors ignore it.
Exploiting section structure for scoring remains an avenue for future work.

We stress-tested the RCD objective against weight perturbations on the simplex for each dataset. Appendix Figure~\ref{fig:sensitivity} visualizes the performance landscape at fixed budgets, and Figures~\ref{fig:mimic-stability}--\ref{fig:weight-sensitivity-suite} summarize tradeoffs and stability proxies. Two patterns are consistent across domains.
First, the landscape is smooth rather than brittle. Moving weight mass from relevance toward redundancy reduction gradually degrades ROUGE scores, indicating that gains are not driven by a knife-edge choice of $(\alpha,\beta,\gamma)$.
Second, stability increases with budget. The fraction of weight settings within a small tolerance of the optimum increases as $B$ grows, which supports the routing premise. At large budgets, multiple policies are effectively interchangeable, while at small budgets, the router matters. These observations also reduce the risk of overfitting. The router is calibrated on a development split, yet the held-out curves preserve the same ordering, and the near-optimal regions in weight space overlap across datasets.

\section{Conclusion}
\label{sec:conclusion}
We formalized budgeted context construction as a knapsack-constrained optimization problem and introduced RCD, a monotone submodular objective combining relevance, coverage, and diversity. Across MIMIC, Cochrane, and L-Eval, we find that the optimal selection strategy depends on the evaluation paradigm: Lead dominates for extractive evaluation at low budgets, while MMR provides consistent gains for LLM-based generation by filtering redundancy. Unitization matters less than selector choice, and ROUGE saturates for abstractive summaries while BERTScore better differentiates methods. A simple budget-based routing heuristic achieves near-oracle performance, providing practitioners with a practical guideline: prioritize diversity-aware selection at tight budgets when using LLMs, and ensure coverage at generous budgets. Experiments on PubMedQA confirm that our framework generalizes to classification tasks and to documents where positional heuristics are harmful. On MIMIC, budgeted selection at 512 tokens (roughly a quarter of median document length) matches full-context performance, demonstrating that principled selection can reduce cost without sacrificing quality. Future work includes section-aware relevance models for clinical note structure, tighter integration with learned compression operators, and robust calibration of routing policies under distribution shift.

\section{Limitations}

\label{sec:limitations}

This study isolates context construction as a separate module, clarifying the budgeted selection problem but omitting end-to-end training of the generator. As a result, the reported improvements reflect what can be achieved solely through selection with a fixed downstream model.
Our experiments focus on summarization-style tasks with reference summaries and one classification task.
The conclusions may not transfer directly to settings such as long-context question answering or tool use, where the notion of relevance is conditioned on a question and where correctness can depend on a single rare fact.
Similarly, our routing policy is tuned on held-out validation data within each dataset.
While we include sensitivity analyses over objective weights, the router could degrade under a distribution shift in note structure or writing style. 

All generation experiments use GPT-4o as the downstream model. Prior work has shown that zero-shot open-source LLMs underperform proprietary models on medical evidence summarization \citep{zhang_benchmarking_2024}, motivating our choice of a strong fixed generator to isolate selector effects. Evaluating with open-source LLMs would improve reproducibility and test whether the observed patterns are generator-dependent. Evaluation on substantially longer inputs, such as BigPatent \citep{sharma_bigpatent_2019}, ArXiv, or PubMed summarization \citep{cohan_discourse_2018}, would provide a more complete picture of scalability and generalizability.

Finally, evaluation relies on automatic metrics. ROUGE and BERTScore provide complementary signals, but neither directly measures factual correctness, clinical safety, or decision impact. Human evaluation and task-based clinical studies are needed to assess whether improvements in overlap and semantic similarity translate into safer and more useful summaries.

\section{Ethical Considerations}
Budgeting tokens in clinical NLP is not only an economic decision. A budgeted context is a lossy view of the record, and the loss is structured by the selection policy. This raises three obligations.

Any sentence-level selector should return provenance identifiers. When an LLM consumes $C_B$, users should be able to trace each generated claim to the originating sentence and inspect what was excluded under the current budget. In our pipeline, provenance is available by construction.

A fixed budget can under-serve notes that are long, atypical, or clinically complex. In practice, institutions should monitor performance as a function of document length, service, and patient acuity, and escalate budgets for high risk encounters. The router in Section~\ref{sec:routing} is compatible with such policies because it can be constrained to select conservative policies at small budgets and to fall back to Lead when uncertainty is high.

If budgets are imposed for cost reasons, the resulting quality degradation must not be systematically borne by particular patient groups or clinical services. We recommend routine stratified reporting and periodic recalibration of routing parameters on representative hospital data.

Finally, our experiments evaluate informativeness using overlap metrics. Overlap is not a sufficient safety criterion for clinical summarization. A deployment should include fact-checking, abstention in low-confidence cases, and human review for consequential decisions.

\section*{Acknowledgements}
This work was supported by the National Library of Medicine [grant numbers R01LM014344, R01LM014573] and the National Science Foundation (NSF) [grant numbers 2145640, 2139899].


\newpage
\appendix

\section{Extended results}

\renewcommand{\thetable}{A\arabic{table}}
\setcounter{table}{0}
\renewcommand{\thefigure}{A\arabic{figure}}
\setcounter{figure}{0}


\begin{table}[h!]
\centering
\small
\resizebox{\columnwidth}{!}{%
\begin{tabular}{llcccc}
\toprule
& & 256 & 512 & 1024 & 2048 \\
\midrule
\multicolumn{6}{l}{MIMIC} \\
& Lead & \textbf{0.242} & \textbf{0.246} & 0.249 & 0.230 \\
& Shuffled & 0.178 & 0.226 & 0.247 & 0.234 \\
& Sliding & 0.111 & 0.158 & 0.205 & 0.219 \\
& Hierarchical & 0.125 & 0.185 & 0.235 & 0.233 \\
& MMR & 0.141 & 0.209 & \textbf{0.262} & \textbf{0.249} \\
& RCD & 0.138 & 0.201 & 0.252 & 0.247 \\
& GraphCluster & 0.117 & 0.174 & 0.210 & 0.220 \\
\midrule
\multicolumn{6}{l}{Cochrane} \\
& Lead & 0.312 & 0.273 & 0.249 & \textbf{0.249} \\
& Shuffled & 0.306 & 0.272 & 0.249 & \textbf{0.249} \\
& Sliding & 0.298 & 0.267 & 0.249 & \textbf{0.249} \\
& Hierarchical & 0.309 & 0.273 & 0.249 & \textbf{0.249} \\
& MMR & \textbf{0.316} & 0.276 & \textbf{0.250} & \textbf{0.249} \\
& RCD & \textbf{0.316} & \textbf{0.278} & \textbf{0.250} & \textbf{0.249} \\
& GraphCluster & 0.284 & 0.261 & 0.249 & \textbf{0.249} \\
\midrule
\multicolumn{6}{l}{L-Eval} \\
& Lead & 0.207 & 0.177 & 0.126 & 0.079 \\
& Shuffled & 0.220 & 0.183 & 0.128 & 0.080 \\
& Sliding & \textbf{0.226} & \textbf{0.190} & \textbf{0.132} & \textbf{0.087} \\
& Hierarchical & 0.207 & 0.184 & 0.130 & 0.081 \\
& MMR & 0.220 & 0.182 & 0.128 & 0.081 \\
& RCD & 0.206 & 0.184 & 0.131 & 0.081 \\
& GraphCluster & 0.217 & 0.176 & 0.122 & 0.077 \\
\bottomrule
\end{tabular}}
\caption{Mean ROUGE-1 F1 across sentence-level selection policies.}
\label{tab:r1_main}
\end{table}

\begin{table}[h!]
\centering
\small
\begin{tabular}{llcccc}
\toprule
Unitization & Selector & 256 & 512 & 1024 & 2048 \\
\midrule
\multirow{4}{*}{Sentence}
  & Lead & .247 & .264 & .269 & .280 \\
  & Shuffled & .230 & .257 & .268 & .271 \\
  & MMR & .265 & .277 & \textbf{\textbf{.282}} & .279 \\
  & Facility & .247 & .267 & .274 & .273 \\
\midrule
\multirow{3}{*}{Hierarchical}
  & Lead & .246 & .262 & .272 & \textbf{.281} \\
  & MMR & \textbf{\textbf{.267}} & \textbf{\textbf{.279}} & .279 & .276 \\
  & Facility & .249 & .267 & .271 & .271 \\
\midrule
\multirow{3}{*}{Sliding}
  & Lead & .238 & .255 & .263 & .274 \\
  & MMR & .254 & .276 & .278 & .277 \\
  & Facility & .237 & .269 & .276 & \textbf{\textbf{.281}} \\
\midrule
\multirow{3}{*}{Cluster}
  & Lead & .230 & .258 & .273 & .277 \\
  & MMR & .258 & .277 & .275 & .280 \\
  & Facility & .240 & .271 & .275 & .278 \\
\bottomrule
\end{tabular}
\caption{ROUGE-1 F1 on MIMIC with LLM generation.}
\label{tab:mimic_llm_rouge}
\end{table}

\begin{table}[t]
\centering
\small
\begin{tabular}{llcccc}
\toprule
Unitization & Selector & 256 & 512 & 1024 & 2048 \\
\midrule
\multirow{4}{*}{Sentence}
  & Lead & .092 & .101 & .112 & \textbf{.121} \\
  & Shuffled & .074 & .101 & .109 & .110 \\
  & MMR & \textbf{\textbf{.112}} & .117 & \textbf{\textbf{.120}} & \textbf{.119} \\
  & Facility & .097 & .110 & .116 & .113 \\
\midrule
\multirow{3}{*}{Hierarchical}
  & Lead & .091 & .099 & .113 & .118 \\
  & MMR & .111 & \textbf{\textbf{.118}} & .119 & .115 \\
  & Facility & .098 & .112 & .116 & .115 \\
\midrule
\multirow{3}{*}{Sliding}
  & Lead & .086 & .094 & .101 & .115 \\
  & MMR & .104 & .114 & .115 & .117 \\
  & Facility & .089 & .110 & .112 & .115 \\
\midrule
\multirow{3}{*}{Cluster}
  & Lead & .083 & .098 & .112 & .116 \\
  & MMR & .104 & \textbf{.118} & .115 & .116 \\
  & Facility & .096 & .110 & .115 & .117 \\
\bottomrule
\end{tabular}
\caption{BERTScore F1 on MIMIC with LLM generation. Scores are rescaled with baseline correction. Values represent semantic similarity above a random baseline.}
\label{tab:mimic_llm_bert}
\end{table}

\newpage
\section{Theoretical Properties of RCD}
\label{app:theory}

We recall standard definitions.
A set function $f:2^{[n]}\to \R$ is \textbf{submodular} if for all $A\subseteq B\subseteq[n]$ and all $e\notin B$,
\begin{equation}
\Delta_f(e\mid A) \ge \Delta_f(e\mid B),
\end{equation}
where $\Delta_f(e\mid S) = f(S\cup\{e\})-f(S)$ denotes the marginal gain.
A function is \textbf{monotone} if $f(A)\le f(B)$ whenever $A\subseteq B$.

\paragraph{Facility location coverage.}
Define $C(S)=\sum_{i=1}^n \max_{j\in S} k(i,j)$, with the convention that $\max_{j\in \emptyset} k(i,j)=0$.
We assume $k(i,j)\ge 0$.

\begin{lemma}
$C$ is monotone submodular.
\end{lemma}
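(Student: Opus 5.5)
The plan is to decompose $C$ into per-row terms and verify both properties for each term separately. Write $C(S)=\sum_{i=1}^n g_i(S)$ with $g_i(S)=\max_{j\in S}k(i,j)$ and $g_i(\emptyset)=0$. Nonnegative sums of monotone submodular functions are again monotone submodular, since the defining inequalities are preserved under addition. It therefore suffices to prove the lemma for a single fixed $i$, writing $w_j=k(i,j)\ge 0$ and $g(S)=\max_{j\in S}w_j$.

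\emph{Monotonicity.} For $A\subseteq B$ we have $g(A)\le g(B)$, because a maximum over a larger set can only increase. The empty-set case is the one place the convention matters: $g(\emptyset)=0\le g(B)$ holds precisely because $k\ge 0$. This is the only use of nonnegativity, and it is the step I would flag as the main point of care. Without $k\ge 0$, adding the first element could decrease $g$, and the marginal gain of the first element would break the diminishing-returns comparison below.

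\emph{Submodularity.} Using $g(S\cup\{e\})=\max(g(S),w_e)$, the marginal gain is
\begin{equation}
\Delta_g(e\mid S)=\max(g(S),w_e)-g(S)=\bigl(w_e-g(S)\bigr)_+ .
\end{equation}
This formula remains valid at $S=\emptyset$, again because $w_e\ge 0$. The map $t\mapsto (w_e-t)_+$ is nonincreasing in $t$. For $A\subseteq B$ with $e\notin B$, monotonicity gives $g(A)\le g(B)$, and hence $\Delta_g(e\mid A)\ge\Delta_g(e\mid B)$.

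Summing over $i$ then yields that $C$ is both monotone and submodular, which completes the proof.
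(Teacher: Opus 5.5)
Your proof is correct and follows the paper's route: write $C$ as a sum of per-row $\max$ terms, check diminishing returns for each term, and use closure under nonnegative sums. Your formula $(w_e - g(S))_+$, together with the fact that it is nonincreasing in $g(S)$, is a compact form of the paper's two-case split on whether $k(i,e)\le m_B$; your explicit treatment of the empty-set convention via $k\ge 0$ is slightly more careful than the paper's.
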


\begin{proof}
Fix $i$ and define $c_i(S)=\max_{j\in S} k(i,j)$.
Monotonicity is immediate since adding elements can only increase the maximum.
For submodularity, let $A\subseteq B$ and $e\notin B$.
Write $m_A=c_i(A)$ and $m_B=c_i(B)$.
Since $A\subseteq B$, we have $m_A\le m_B$.
The marginal gains satisfy
\begin{align*}
\Delta_{c_i}(e\mid A) &= \max\{m_A,k(i,e)\}-m_A,\\
\Delta_{c_i}(e\mid B) &= \max\{m_B,k(i,e)\}-m_B.
\end{align*}
If $k(i,e)\le m_B$ then $\Delta_{c_i}(e\mid B)=0$ and $\Delta_{c_i}(e\mid A)\ge 0$.
If $k(i,e)> m_B$ then $\Delta_{c_i}(e\mid B)=k(i,e)-m_B$ and $\Delta_{c_i}(e\mid A)=k(i,e)-m_A\ge k(i,e)-m_B$.
In both cases $\Delta_{c_i}(e\mid A)\ge \Delta_{c_i}(e\mid B)$, so $c_i$ is submodular.
Finally, $C$ is a nonnegative sum of submodular functions, so it is submodular \citep{krause_golovin_2014}.
\end{proof}

\paragraph{Log-determinant diversity.}
Assume $K\succeq 0$ and $\eta>0$, and define $D(S)=\log\det(I+\eta K_S)$.

\begin{lemma}
$D$ is monotone submodular.
\end{lemma}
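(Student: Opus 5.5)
The plan is to express the marginal gains of $D$ through Schur complements, and then compare them using a monotonicity property of posterior variances. Set $L = I + \eta K$. Since $K\succeq 0$, $L$ is positive definite, and for any $S$ we have $I+\eta K_S = L_S$, the principal submatrix of $L$ on $S$. Thus $D(S)=\log\det L_S$, with $D(\emptyset)=0$ by the convention $\det$ of the empty matrix $=1$.

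For $e\notin S$, the block determinant formula gives
\[
\Delta_D(e\mid S) = \log\bigl(L_{ee} - L_{eS} L_S^{-1} L_{Se}\bigr).
\]
This Schur complement is well defined because $L_S\succ 0$. It is also strictly positive, since it is the Schur complement of a positive definite matrix $L_{S\cup\{e\}}$.

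\textbf{Monotonicity.} Write the Schur complement as $L_{ee}-L_{eS}L_S^{-1}L_{Se}$ with $L_{ee}=1+\eta K_{ee}$ and $L_{eS}=\eta K_{eS}$. The aim is to show it is at least $1$, so that its logarithm is nonnegative. To do this, view the complement as $1+\eta\bigl(K_{ee}-\eta K_{eS}(I+\eta K_S)^{-1}K_{Se}\bigr)$. The bracketed quantity is the Schur complement of the PSD matrix $\eta^{-1}I_S\oplus 0 + K_{S\cup\{e\}}$, so it is nonnegative. Equivalently, take a Gram factorization $K=\Phi^\top\Phi$ and note the bracket equals $\phi_e^\top(I+\eta\Phi_S\Phi_S^\top)^{-1}\phi_e\ge 0$. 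Hence $\Delta_D(e\mid S)\ge 0$.

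\textbf{Submodularity.} For $A\subseteq B$ and $e\notin B$, it suffices to show that the Schur complement $s(e\mid S)=L_{ee}-L_{eS}L_S^{-1}L_{Se}$ is nonincreasing in $S$, because $\log$ is increasing. I would interpret $s(e\mid S)$ as a conditional variance. Take a Gaussian vector $x\sim\mathcal N(0,L)$; then $s(e\mid S)=\mathrm{Var}(x_e\mid x_S)$. Conditioning on more variables cannot increase conditional variance, so $s(e\mid B)\le s(e\mid A)$.

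To keep the argument purely linear-algebraic, I would instead use the variational characterization
\[
s(e\mid S)=\min_{w}\,(\mathbf 1_e - w)^\top L(\mathbf 1_e-w),
\]
where $w$ ranges over vectors supported on $S$. This identity holds because $L\succ 0$, and the minimum is attained at $w=L_S^{-1}L_{Se}$. Enlarging $S$ from $A$ to $B$ enlarges the feasible set of the minimization, so the minimum can only decrease.

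This variational identity is the one step that needs care; verifying it is a routine completion of squares. Once it is in place, both properties follow immediately, and $D$ is monotone submodular.
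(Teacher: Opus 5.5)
Your proof is correct, but it follows a different route from the paper's.

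The paper factors $K=\Phi\Phi^\top$ and uses Sylvester's identity to pass to the $d\times d$ matrix $M_S=I+\eta\Phi_S^\top\Phi_S$. It reads off the gain $\log(1+\eta\,\phi_e^\top M_S^{-1}\phi_e)$ from the matrix determinant lemma, and gets submodularity from $M_A\preceq M_B\Rightarrow M_A^{-1}\succeq M_B^{-1}$.

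You stay in item space with $L=I+\eta K$ and write the gain as the log of a Schur complement. You then show this Schur complement shrinks as $S$ grows, via the least-squares characterization $s(e\mid S)=\min_w(\mathbf 1_e-w)^\top L(\mathbf 1_e-w)$ over $w$ supported on $S$.

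The two routes compute the same number. Your Gram-factorization remark, $s(e\mid S)=1+\eta\,\phi_e^\top(I+\eta\Phi_S\Phi_S^\top)^{-1}\phi_e$, is exactly the paper's formula. The key lemma differs, though: you replace the Loewner antitonicity of matrix inversion with the elementary fact that a minimum over a larger feasible set is smaller. Your submodularity step also needs no factorization and holds for $S\mapsto\log\det L_S$ with any $L\succ 0$. This makes clear that the structure $L=I+\eta K$, $K\succeq 0$, is needed only for monotonicity.

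You could even drop your separate monotonicity argument. Since $L\succeq I$ and $w$ vanishes at $e$, $(\mathbf 1_e-w)^\top L(\mathbf 1_e-w)\ge\|\mathbf 1_e-w\|^2=1+\|w\|^2\ge 1$.

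What the paper's version buys in return is a closed-form gain in the embedding dimension $d$. That is the quantity a lazy-greedy implementation would actually maintain, through rank-one updates of $M_S^{-1}$.
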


\begin{proof}
Since $K\succeq 0$, there exists a feature map $\Phi\in \R^{n\times d}$ such that $K=\Phi\Phi^\top$.
Let $\Phi_S$ denote the submatrix with rows indexed by $S$.
By Sylvester's determinant identity,
\begin{align*}
\det(I+\eta K_S) &= \det(I+\eta \Phi_S\Phi_S^\top)\\
&= \det(I+\eta \Phi_S^\top \Phi_S).
\end{align*}
Let $M_S = I+\eta \Phi_S^\top \Phi_S \succ 0$.
For $e\notin S$, the matrix determinant lemma gives
\[
\det(M_{S\cup\{e\}})=\det(M_S)\left(1+\eta\,\phi_e^\top M_S^{-1}\phi_e\right),
\]
where $\phi_e$ is the feature vector for item $e$.
Taking logs yields the marginal gain
\begin{equation}
\Delta_D(e\mid S)=\log\!\left(1+\eta\,\phi_e^\top M_S^{-1}\phi_e\right).
\label{eq:logdet_marginal}
\end{equation}
Monotonicity holds because $M_S^{-1}\succeq 0$ implies the argument of the log is at least $1$.
For submodularity, note that if $A\subseteq B$ then $M_A \preceq M_B$ and hence $M_A^{-1}\succeq M_B^{-1}$ in the Loewner order.
Therefore $\phi_e^\top M_A^{-1}\phi_e \ge \phi_e^\top M_B^{-1}\phi_e$, and \eqref{eq:logdet_marginal} implies $\Delta_D(e\mid A)\ge \Delta_D(e\mid B)$ since $\log(1+x)$ is increasing.
Thus $D$ is submodular \citep{kulesza_taskar_2012,krause_golovin_2014}.
\end{proof}

\paragraph{RCD objective.}
Let $R(S)=\sum_{i\in S} r_i$ with $r_i\ge 0$.

\begin{proposition}
If $\alpha,\beta,\gamma\ge 0$, $k(i,j)\ge 0$, and $K\succeq 0$, then
\[
F(S)=\alpha R(S)+\beta C(S)+\gamma D(S)
\]
is monotone submodular.
\end{proposition}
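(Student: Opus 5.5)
The plan is to reduce the proposition to the two preceding lemmas together with a closure property: a nonnegative linear combination of monotone submodular functions is again monotone submodular. The argument works entirely through marginal gains, since both properties are statements about $\Delta_f(e\mid S)$.

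First I would handle the relevance term $R(S)=\sum_{i\in S} r_i$. It is modular, so its marginal gain is
\[
\Delta_R(e\mid S)=r_e,
\]
which does not depend on $S$. Submodularity then holds with equality. Monotonicity follows from $r_e\ge 0$, because $R(B)-R(A)=\sum_{i\in B\setminus A} r_i\ge 0$ whenever $A\subseteq B$.

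Next I would invoke the earlier lemmas. Facility location coverage $C$ is monotone submodular under $k(i,j)\ge 0$. Log-determinant diversity $D$ is monotone submodular under $K\succeq 0$ and $\eta>0$. These are exactly the hypotheses of the proposition.

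The final step uses linearity of marginal gains:
\[
\Delta_F(e\mid S)=\alpha\,\Delta_R(e\mid S)+\beta\,\Delta_C(e\mid S)+\gamma\,\Delta_D(e\mid S).
\]
For $A\subseteq B$ and $e\notin B$, each component satisfies $\Delta(e\mid A)\ge\Delta(e\mid B)$. Multiplying by the nonnegative weights $\alpha,\beta,\gamma$ preserves these inequalities, and summing them gives submodularity of $F$.

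For monotonicity, each component has nonnegative marginal gains. Hence $\Delta_F(e\mid S)\ge 0$ for every $S$ and $e\notin S$. Adding the elements of $B\setminus A$ one at a time then gives $F(A)\le F(B)$.

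There is no real obstacle in this argument. The only point needing care is that all three weights must be nonnegative. A negative weight would flip the direction of the inequality for that term and could break both properties. Nonnegativity is assumed in the statement, and I would remark explicitly that it is what the argument depends on.
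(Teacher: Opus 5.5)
Your proposal is correct and follows the same route as the paper: $R$ is modular, $C$ and $D$ are monotone submodular by the two lemmas, and nonnegative linear combinations preserve both properties. Your version is slightly more explicit, since it shows the closure step through marginal gains and notes that monotonicity of $R$ needs $r_i\ge 0$; that condition, like $\eta>0$, is a standing assumption in the surrounding text rather than a stated hypothesis of the proposition.
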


\begin{proof}
$R$ is modular and hence submodular.
By the previous lemmas, $C$ and $D$ are monotone submodular.
A nonnegative linear combination of submodular functions is submodular, and a nonnegative linear combination of monotone functions is monotone \citep{krause_golovin_2014}.
\end{proof}

\paragraph{Optimization under a token knapsack.}
Maximizing a monotone submodular $F$ under a knapsack constraint $\sum_{i\in S}c_i\le B$ is NP-hard in general.
There exist polynomial-time algorithms achieving a $(1-1/e)$ approximation \citep{sviridenko_2004}.
In practice, we use a lazy greedy variant that iteratively selects the item with the largest marginal gain per token cost, combined with a best-singleton check.
This heuristic is widely used in large-scale summarization and often performs close to the theoretical algorithms while remaining simple to implement.

\section{Ensuring Positive Semidefiniteness}
\label{app:psd}
When $K=XX^\top$ for an embedding matrix $X$, positive semidefiniteness holds by construction.
Otherwise, we project by symmetrization and eigenvalue clipping: compute $K \leftarrow \frac{1}{2}(K+K^\top)$, perform eigendecomposition, replace negative eigenvalues with zero, and add small diagonal jitter.
\section{Performance Metrics Definitions}
\label{app:metrics}

\subsection{ROUGE-1}
Let $y$ denote the generated text and $y^*$ denote the reference summary. Then, let $U(y)$ denote the multiset of unigrams in $y$. Precision and recall are defined as:
\begin{align}
P_1 &= \frac{|U(y) \cap U(y^*)|}{|U(y)|}, &
R_1 &= \frac{|U(y) \cap U(y^*)|}{|U(y^*)|}.
\end{align}
And ROUGE-1 is the harmonic mean:
\begin{equation}
\text{ROUGE-1} = \frac{2 P_1 R_1}{P_1 + R_1}
\end{equation}

\subsection{ROUGE-2}
Let $B(y)$ denote the multiset of consecutive word pairs in $y$. Precision, recall and F1 are defined analogously:
\begin{equation}
P_2 = \frac{|B(y) \cap B(y^*)|}{|B(y)|}, \quad
R_2 = \frac{|B(y) \cap B(y^*)|}{|B(y^*)|}.
\end{equation}
\begin{equation}
\text{ROUGE-2} = \frac{2 P_2 R_2}{P_2 + R_2}, \quad
\end{equation}

\subsection{BERTScore}
Let $\mathbf{y} = (\mathbf{y}_1, \ldots, \mathbf{y}_m)$ and $\mathbf{y}^* = (\mathbf{y}^*_1, \ldots, \mathbf{y}^*_n)$ denote the contextualized token embeddings from a pretrained model (we use DeBERTa-xlarge-MNLI).
Precision and recall are computed via greedy maximum cosine similarity matching:
\begin{align}
P_{\text{BERT}} &= \frac{1}{m} \sum_{i=1}^{m} \max_{j} \, \mathbf{y}_i^\top \mathbf{y}^*_j, \\
R_{\text{BERT}} &= \frac{1}{n} \sum_{j=1}^{n} \max_{i} \, \mathbf{y}_i^\top \mathbf{y}^*_j.
\end{align}
BERTScore F1 is the harmonic mean:
\begin{equation}
\text{BERTScore} = \frac{2 P_{\text{BERT}} R_{\text{BERT}}}{P_{\text{BERT}} + R_{\text{BERT}}}.
\end{equation}

\section{Sensitivity Analysis}
\label{app:sensitivity}

Figures~A1–A4 collectively show that the RCD objective is stable across a broad range of weight configurations and budgets, rather than being tuned to a narrow parameter regime. Figure~\ref{fig:sensitivity} demonstrates that, for a fixed budget, the mean ROUGE-1 score varies only modestly as $(\alpha,\beta,\gamma)$ are perturbed, with no abrupt degradation when shifting mass between relevance, coverage, and diversity. Figure~\ref{fig:mimic-tradeoff}  makes this observation quantitative by plotting mean ROUGE-1 against the Wasserstein distance from the best-performing weight vector: performance decreases smoothly and approximately monotonically as the distance increases, indicating that suboptimal weights induce gradual, not catastrophic, loss. Figure~\ref{fig:mimic-stability}, ~\ref{fig:cochrane-stability} further shows that the fraction of weight settings within a small tolerance of the best score grows rapidly with the budget, approaching one for larger budgets on both MIMIC and Cochrane, which implies that RCD becomes increasingly insensitive to precise weight choice as more tokens are available. Finally, the simplex visualizations in Figure~\ref{fig:mimic-simplex}, ~\ref{fig:cochrane-simplex} reveal wide plateaus of near-optimal solutions rather than isolated optima, with the empirically selected weights (highlighted in yellow) lying in the interior of these regions. Taken together, these results provide strong evidence that RCD is not overfitting to a specific $(\alpha,\beta,\gamma)$ configuration and that its empirical gains are driven by the structure of the objective itself, not by fragile hyperparameter tuning.

\begin{figure*}[!h]
\centering
\begin{subfigure}{0.32\linewidth}
\centering
\includegraphics[width=\linewidth]{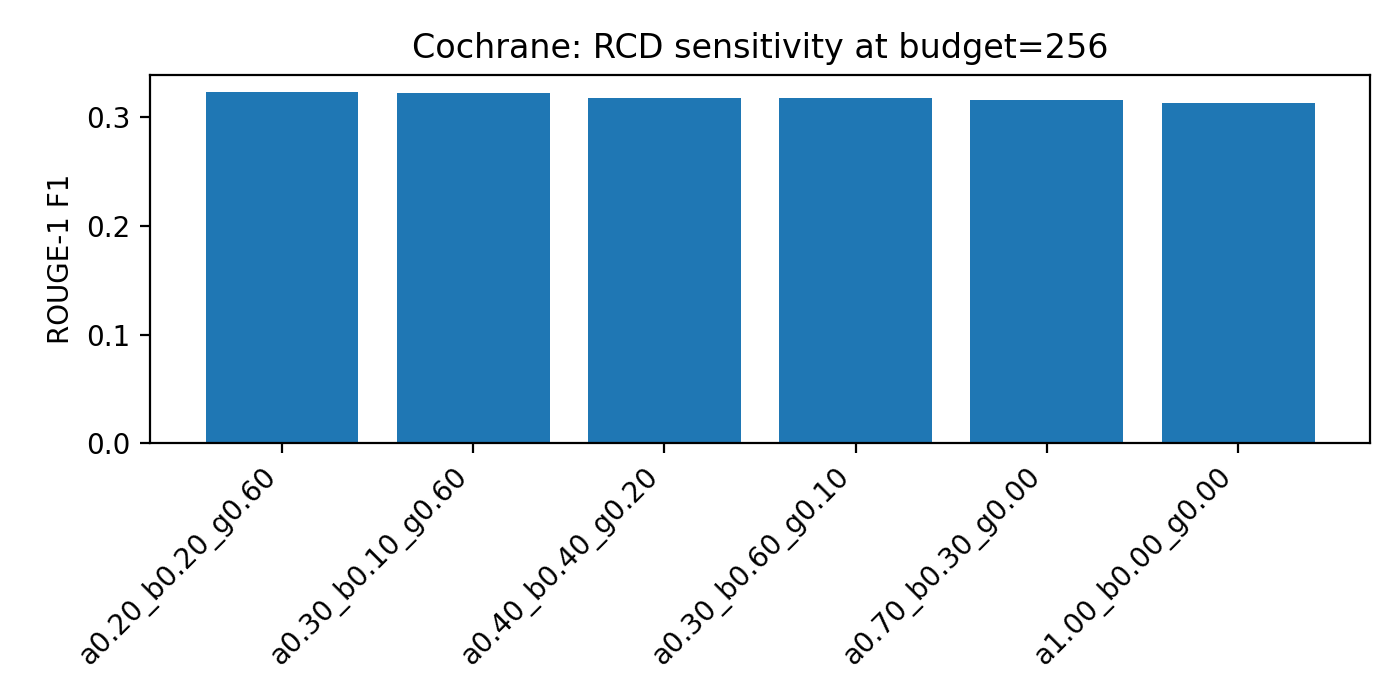}
\caption{Cochrane}
\end{subfigure}
\begin{subfigure}{0.32\linewidth}
\centering
\includegraphics[width=\linewidth]{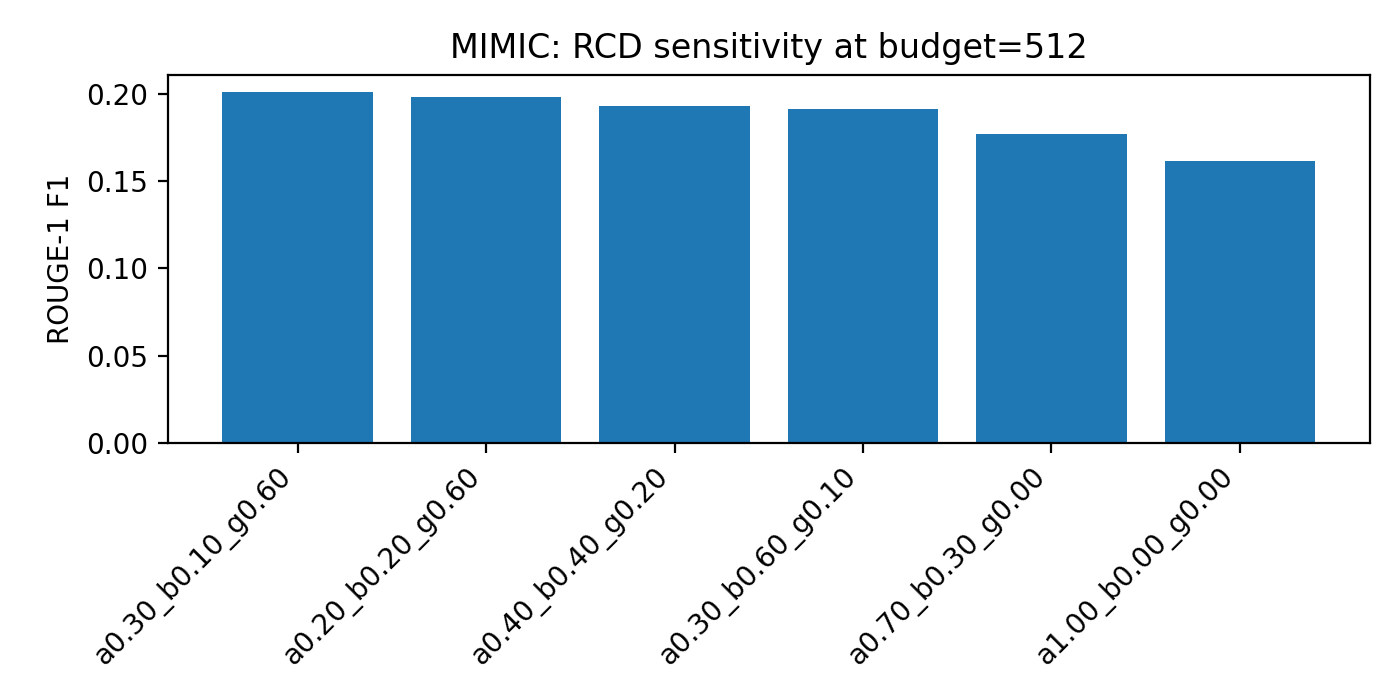}
\caption{MIMIC}
\end{subfigure}
\begin{subfigure}{0.32\linewidth}
\centering
\includegraphics[width=\linewidth]{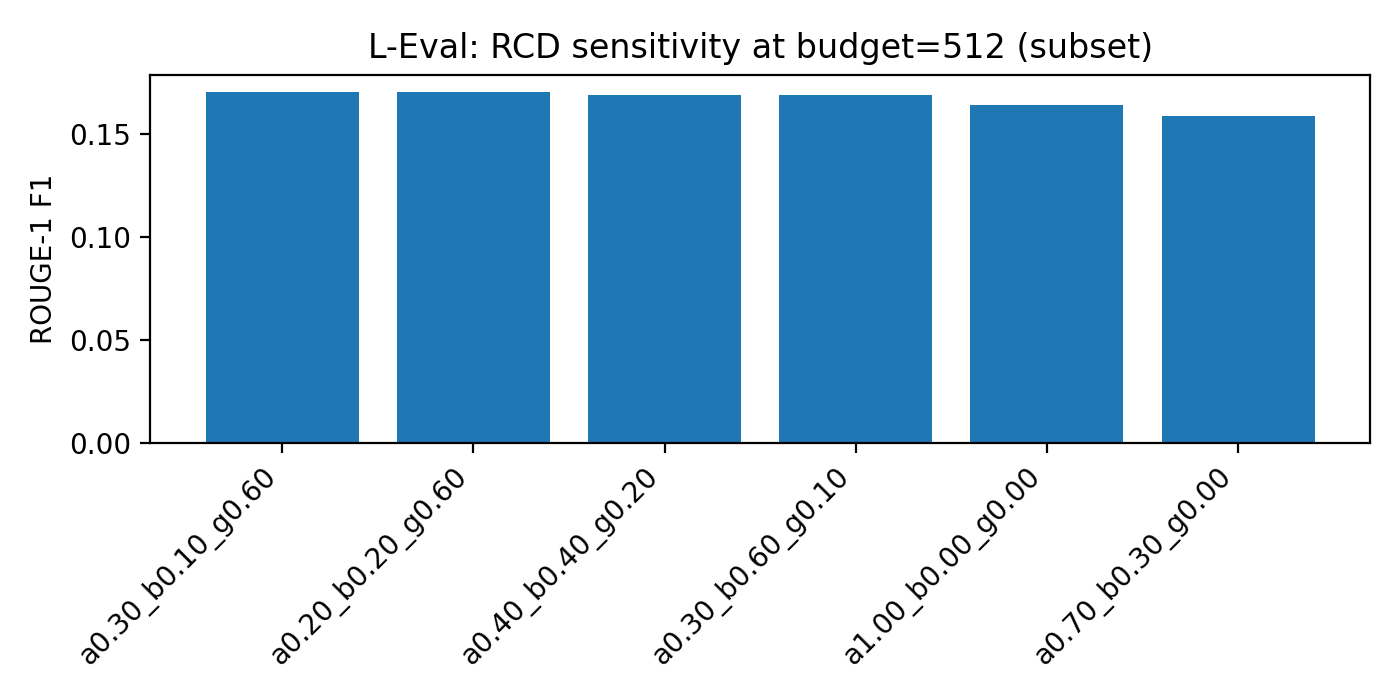}
\caption{L-Eval}
\end{subfigure}
\caption{Sensitivity of RCD to the objective weights $\alpha,\beta,\gamma$.}
\label{fig:sensitivity}
\end{figure*}

\begin{figure*}[!h]
    \centering
    \includegraphics[width=.67\linewidth]{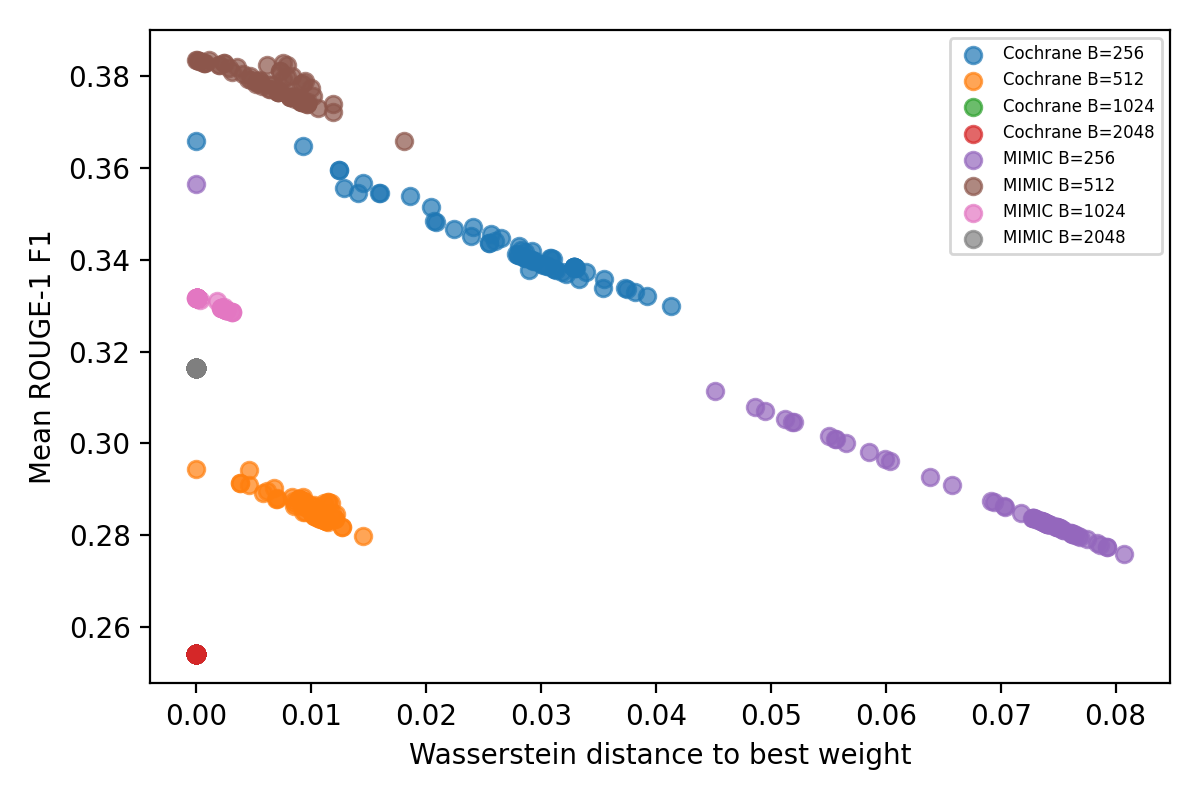}
    \caption{Mean ROUGE-1 F1 Score vs. Wasserstein Distance from Best Weight by dataset and budget}
    \label{fig:mimic-tradeoff}
\end{figure*}

\begin{figure*}[!h]
\centering
  \begin{subfigure}[b]{0.4\textwidth}
    \centering
    \includegraphics[width=\linewidth,clip,trim=0 0 0 1.5em]{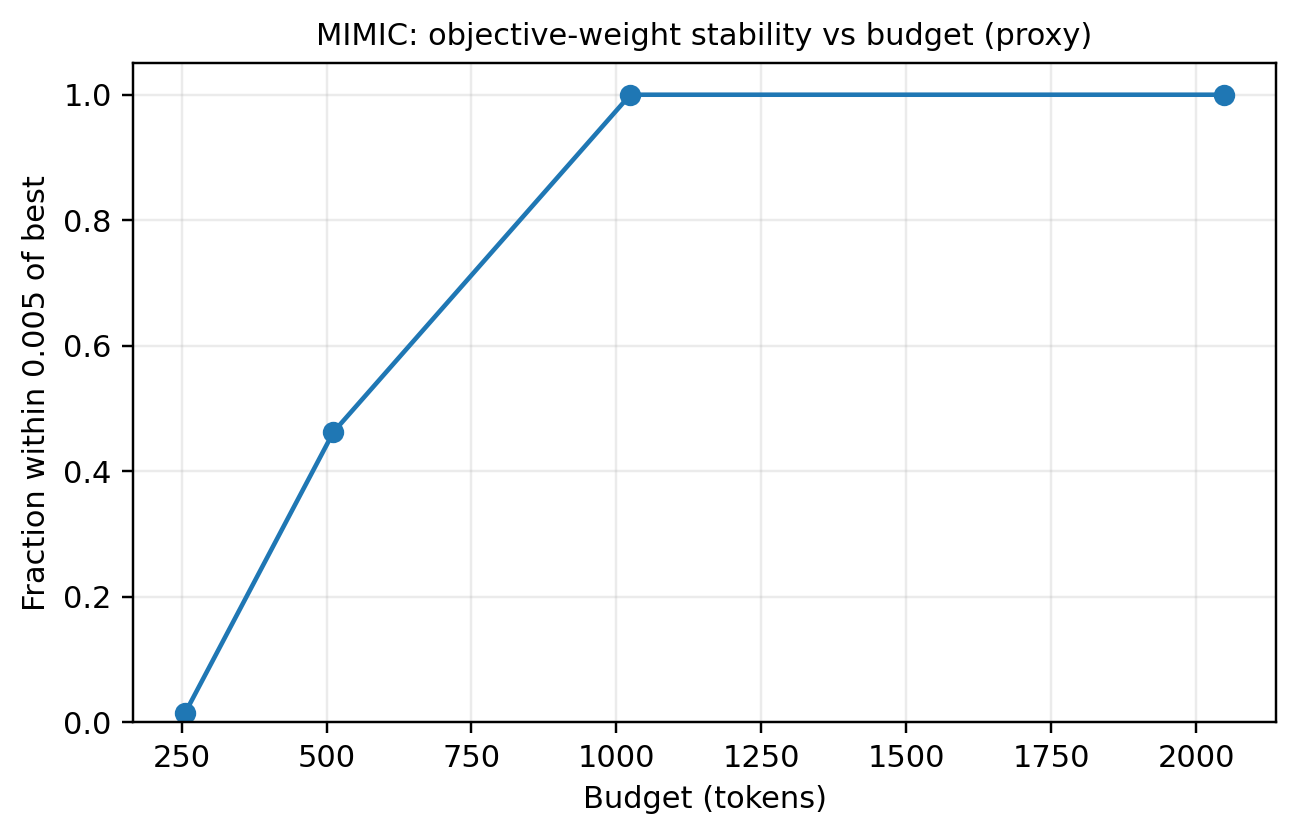}
    \caption{MIMIC}
    \label{fig:mimic-stability}
  \end{subfigure}
  \hspace{3em}
  \begin{subfigure}[b]{0.4\textwidth}
    \centering
    \includegraphics[width=\linewidth,clip,trim=0 0 0 1.5em]{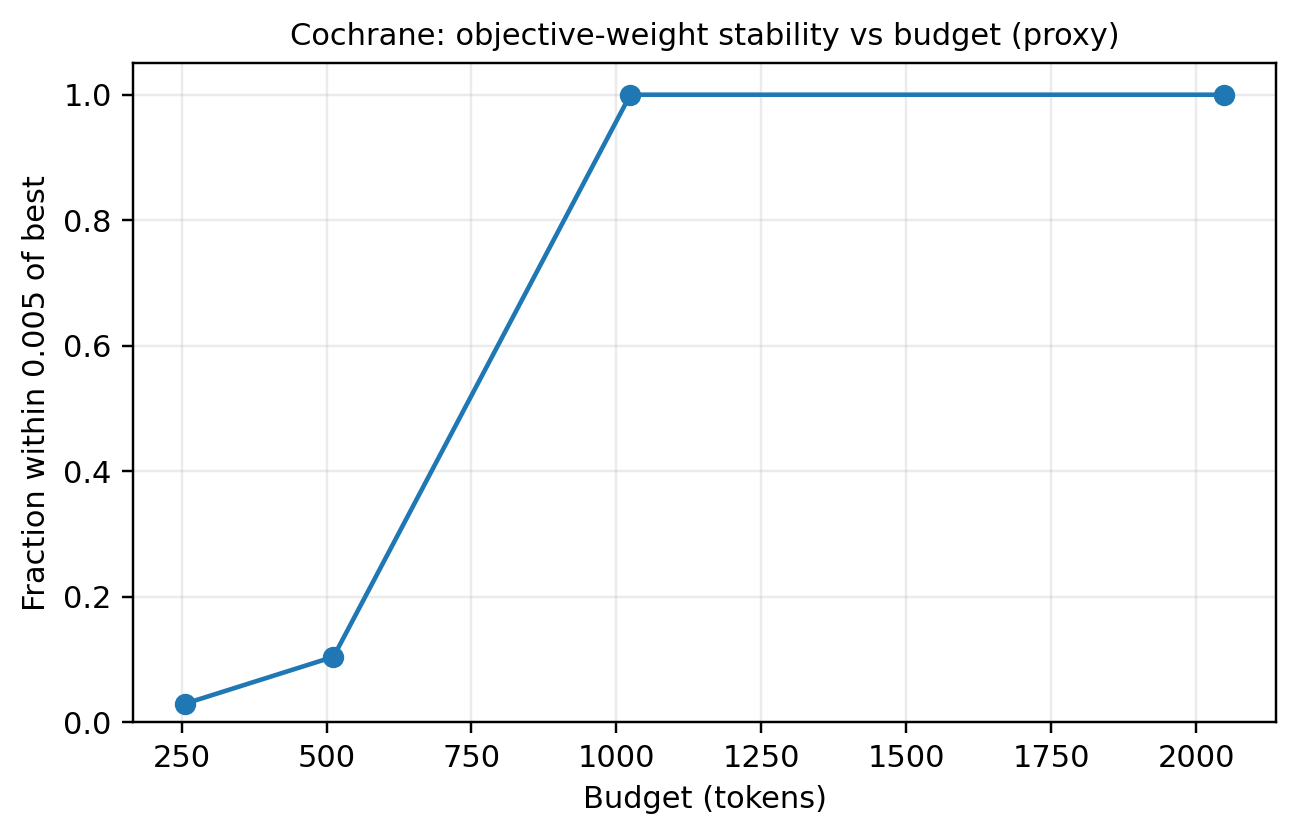}
    \caption{Cochrane}
    \label{fig:cochrane-stability}
  \end{subfigure}
\caption{Robustness proxy plots for MIMIC and Cochrane across budgets and objectives.}
\end{figure*}

\begin{figure*}[!h]
\centering
  \begin{subfigure}[b]{0.44\textwidth}
    \centering
    \includegraphics[width=\linewidth]{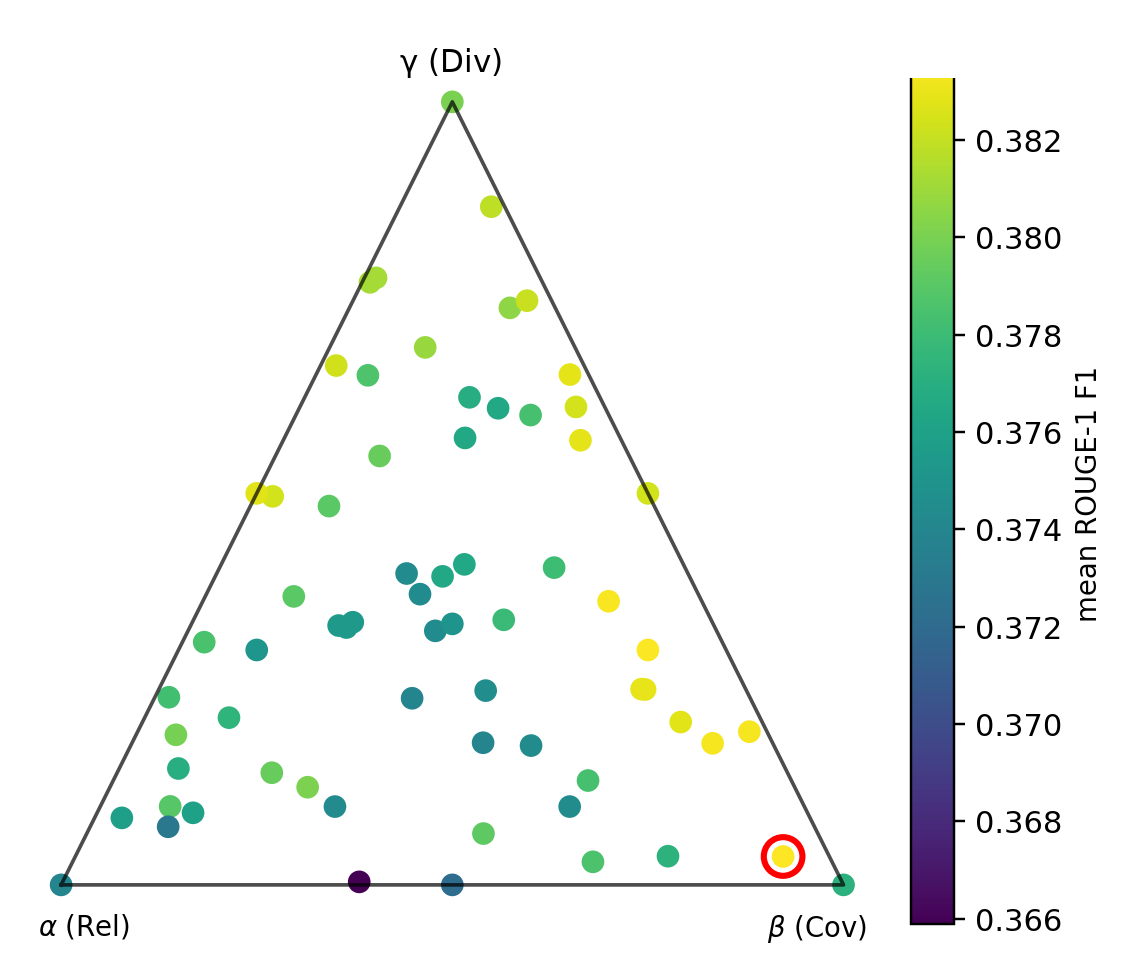}
    \caption{MIMIC}
    \label{fig:mimic-simplex}
  \end{subfigure}
  \hspace{3em}
  \begin{subfigure}[b]{0.44\textwidth}
    \centering
    \includegraphics[width=\linewidth]{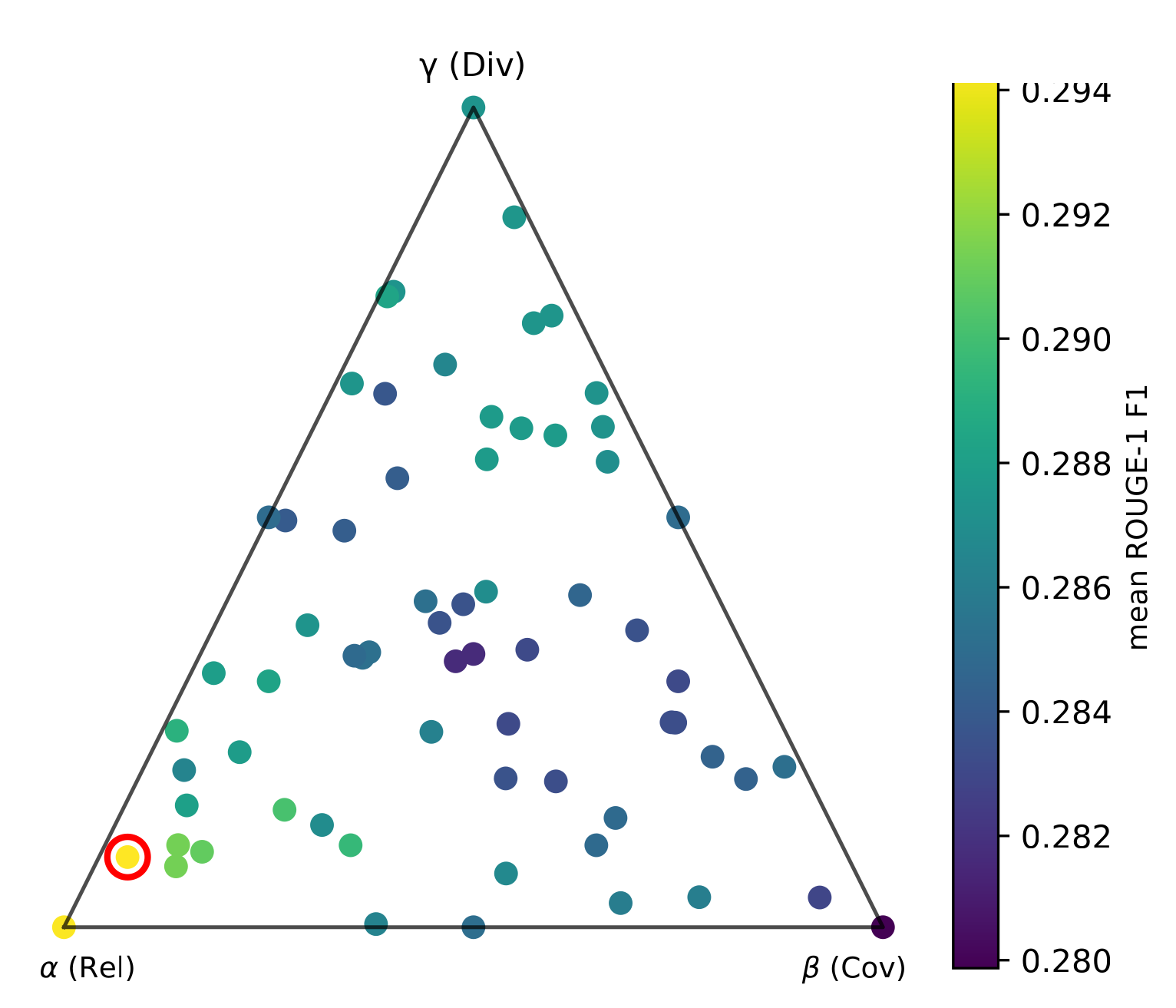}
    \caption{Cochrane}
    \label{fig:cochrane-simplex}
  \end{subfigure}
  \caption{Weight sensitivity on MIMIC and Cochrane ($B=512$).}
  \label{fig:weight-sensitivity-suite}
\end{figure*}

\end{document}